%% file: main.tex
\documentclass{article}

\usepackage[english]{babel}

\usepackage[a4paper,top=2cm,bottom=2cm,left=3cm,right=3cm,marginparwidth=1.75cm]{geometry}

\usepackage[round]{natbib}

\usepackage{url}
\usepackage{booktabs}

\input{preamble}

\title{Generative Modeling under Non-Monotone MAR Missingness via Approximate Wasserstein Gradient Flows}

\author{%
    Gitte Kremling \\
    \normalsize Department of Mathematics\\
    \normalsize University of Hamburg\\
    \normalsize \texttt{gitte.kremling@uni-hamburg.de}
    \and
    Jeffrey N\"af \\
    \normalsize Research Institute for Statistics\\\normalsize and Information Science\\
    \normalsize University of Geneva\\
    \normalsize \texttt{jeffrey.naf@unige.ch}
    \and
    Johannes Lederer \\
    \normalsize Department of Mathematics\\
    \normalsize University of Hamburg\\
    \normalsize \texttt{johannes.lederer@uni-hamburg.de}
}

\date{}

\begin{document}

\maketitle

\input{content/abstract}

\input{content/intro}
The code to replicate all experiments can be found on \url{https://github.com/gkremling/FLOWGEM}.

\input{content/background}

\input{content/method}

\input{content/results}

\input{content/conclusion}


\bibliographystyle{abbrvnat}
\bibliography{literature}

\appendix
\input{content/appendix}

\end{document}

%% file: preamble.tex
\usepackage{amsmath}
\usepackage{amssymb}
\usepackage{amsthm}
\usepackage{physics}
\usepackage{mathtools}
\usepackage{tikz} 
\usepackage[svgnames]{xcolor}
\usepackage[ruled, noline, linesnumbered]{algorithm2e}
\usepackage{xspace}
\DontPrintSemicolon

\newcommand{\R}{\mathbb{R}}
\renewcommand{\P}{\mathbb{P}}
\newcommand{\expec}[2][]{\mathbb{E}_{#1}\left[#2\right]}
\newcommand{\KL}[1]{\ensuremath{\mathrm{KL}}\left(#1\right)}
\newcommand{\NA}{\text{NA}}
\DeclareMathOperator*{\argmax}{arg\,max}
\DeclareMathOperator*{\argmin}{arg\,min}
\newcommand{\de}{\, \mathrm d }

\newcommand{\eps}{\varepsilon}
\newcommand{\Gauss}[2]{\mathcal{N}\left(#1,#2\right)}
\newcommand{\bigo}[1]{\mathcal{O}\qty(#1)}

\newcommand{\PMmXx}[1]{\mathbb{P}(M=#1 \mid X=x)}
\newcommand{\PMmXxm}[1]{\mathbb{P}(M=#1 \mid X^{(m)}=x^{(m)})}

\newcommand{\pX}{\pi}
\newcommand{\pXnew}{\rho}
\newcommand{\dfunc}{g}
\newcommand{\name}{FLOWGEM\xspace}

\newtheorem{lemma}{Lemma}

\newtheorem{proposition}[lemma]{Proposition}

%% file: content/abstract.tex
\begin{abstract}
    The prevalence of missing values in data science poses a substantial risk to any further analyses. Despite a wealth of research, principled nonparametric methods to deal with general non-monotone missingness are still scarce. Instead, ad-hoc imputation methods are often used, for which it remains unclear whether the correct distribution can be recovered. In this paper, we propose \name, a principled iterative method for generating a complete dataset from a dataset with values Missing at Random (MAR). Motivated by convergence results of the ignoring maximum likelihood estimator, our approach minimizes the expected Kullback–Leibler (KL) divergence between the observed data distribution and the distribution of the generated sample over different missingness patterns. To minimize the KL divergence, we employ a discretized particle evolution of the corresponding Wasserstein Gradient Flow, where the velocity field is approximated using a local linear estimator of the density ratio. This construction yields a data generation scheme that iteratively transports an initial particle ensemble toward the target distribution.
    Simulation studies and real-data benchmarks demonstrate that \name achieves state-of-the-art performance across a range of settings, including the challenging case of non-monotone MAR mechanisms. 
    Together, these results position \name as a principled and practical alternative to existing imputation methods, and a decisive step towards closing the gap between theoretical rigor and empirical performance.

\end{abstract}

%% file: content/intro.tex
\section{Introduction}

Increasing volumes of data have made missing values an almost ubiquitous challenge across disciplines. Despite the steep upturn of research on the topic, there is still a substantial gap between theory and practice. This is particularly true in the challenging case of general non-monotone missingness illustrated in Figure \ref{fig:illustrationfocond}: Instead of observing independently and identically distributed data directly, observations are masked according to an unknown missingness mechanism. While this missingness can be monotone, such as in the left example in Figure \ref{fig:illustrationfocond}, complex non-monotone patterns of missingness are possible.

  \begin{figure*}[!ht]
    \centering
\begin{tikzpicture}


\node at (0,0) {$ \begin{pmatrix}
x_{1,1} & x_{1,2} & x_{1,3} \\
\textrm{NA} & x_{2,2} & x_{2,3} \\
\textrm{NA} & \textrm{NA} & x_{3,3}
\end{pmatrix}$};

\node at (4,0) {$\begin{pmatrix}
x_{1,1} & x_{1,2} & x_{1,3} \\
x_{2,1} & \textrm{NA} & x_{2,3}\\
 \textrm{NA} & x_{3,2} & x_{3,3} 
\end{pmatrix}$};


\node at (8,0) {$\begin{pmatrix}
x_{1,1} & x_{1,2} & x_{1,3} \\
 x_{1,2} &\textrm{NA} & x_{2,3} \\
\textrm{NA} & x_{3,2} &\textrm{NA} 
\end{pmatrix}$};

\end{tikzpicture}
    \caption{Three data matrices with missing values, each with three different patterns. The matrix on the left shows monotone missingness, while the middle and right show cases with non-monotone missingness. The middle matrix is the one used in the Example of Section \ref{sec:emp_res}.}
    \label{fig:illustrationfocond}
\end{figure*}

Following Rubin's classical categorization \citep{Rubin1976, whatismeant3}, the missing value mechanism can be divided into three categories of increasing complexity: Missing Completely at Random (MCAR), Missing at Random (MAR) and Missing Not at Random (MNAR). In this categorization, MCAR is generally understood to be too weak, while MNAR does not allow for identification in general \citep{ourresult}. As such MAR is of special interest and has taken a prominent role in the applied and methodological literature on missing data. At the same time, MAR is a target of frequent misunderstandings, as evidenced by the number of papers simply discussing the condition \citep{whatismeant, whatismeant3, näf2024goodimputationmarmissingness}. This, in addition to the scattered literature, might explain why there has been limited theoretical development for the case of non-monotone MAR. In particular, while \citet{Rubin1976} introduced MAR to justify a parametric maximum likelihood approach that allows to model only the data distribution and not the missingness mechanism (thus ``ignoring'' the missingness), consistency and asymptotic normality of this approach was only shown as recently as \citet{Takai2013}. Inspecting their approach and also later papers such as \citet{Mestimatormissingvalues} reveals that these consistency results in the complex case of non-monotone general MAR are deeply connected with the minimization of the Kullback-Leibler (KL) divergence between observed and proposed distributions. In particular, consistency does not hold for general M-estimation under MAR. Using this connection in a Bayesian framework, \citet{Bayes} derived a minimax rate estimation result for nonparametric density estimation under non-monotone MAR missingness. Based on this, they developed a generative method that takes a dataset contaminated with missing values and returns a sample from the uncontaminated distribution. The generated sample then allows to calculate consistent estimates of any continuous functional of the density. However, as the method is based on Gaussian mixtures, its performance on complex real data is less than ideal, as the authors note themselves.

On the other hand, there is a wealth of empirically well-performing imputation methods that tackle (non-monotone) MAR missingness. Leaving the observed variables unchanged, these methods try to ``fill in'' the missing values. This has led to a belief that the goal of imputation is to find the best prediction of the missing values. However, as argued in \citet{VANBUUREN2018, näf2024goodimputationmarmissingness} and others, the goal is really to obtain a generative method that recreates the original uncontaminated distribution, to be able to achieve unbiased estimates of functionals of interest in a second step. As such, both the generative method of \citet{Bayes} and the wealth of imputation methods arguably share the same goal of recreating the uncontaminated distribution. While there is interesting theory on parametric imputation \citep{RobinsImputationTheory, MICE_Results, MICE_Results2, Guan2024}, practitioners often resort to nonparametric imputation such as versions of MICE \citep{mice} and modern neural net and machine learning methods such as MIWAE \citep{MIWAE}, GAIN \citep{GAIN}, or MIRI \citep{MIRI}. However, these nonparametric imputation methods lack theoretical grounding. In particular, it is not even clear whether they identify the right distribution under MAR in a population setting, that is, under perfect estimation, although important progress was made recently in \citep{MIRI} as outlined in Section~\ref{sec:more_literature}.

We attempt to close the gap between the imputation methods that perform empirically well but lack a theoretical underpinning and the generative method of \citet{Bayes} that is theoretically sound but empirically underperforming. For this, we follow the idea of the ignoring maximum likelihood estimator.
Our main contributions are as follows:
\begin{itemize}
    \item We show that, under MAR and a positivity condition, the true distribution emerges as the minimizer of the KL divergence on observed values averaged over all patterns. 
    \item We adapt the approximate Wasserstein gradient flow approach of \citet{liu2024minimizing} to develop a nonparametric sampling method, which we call Flow-based Generation for Missing data (\name), that generates a new sample from the uncontaminated distribution by minimizing the average KL divergence between the new distribution and the observed data.
    \item We demonstrate that \name is both theoretically sensible---the gradient approximation error vanishes asymptotically---and empirically competitive, matching and exceeding the performance of state-of-the-art methods. 
\end{itemize}
The new algorithm simply uses a local linear approximation and is thus fast to fit, while the gradient flow approach allows for great flexibility. All of this is possible with a minimal set of tuning parameters that can be chosen with heuristics that tend to work well for a wide range of datasets. In fact, our empirical section shows that our method matches or exceeds state-of-the-art methods on data ranging from simulated data with $d=3$ to real data with widely varying variances and $d=272$, all using the same setting.

Just as in \citet{misgan, missdiff, Bayes}, our approach is motivated by the fact that data analysts are typically interested in aspects of the data distribution rather than the data themselves. Consequently, both a completed version of the original dataset, as targeted in imputation methods, as well as a newly generated complete dataset following the same distribution, as targeted in generative methods, serve the purpose of the subsequent analysis. Additionally, new samples do not expose the original data points, making it very useful for privacy-sensitive applications. 

The paper proceeds as follows: We first dive into the background of the maximum likelihood estimator under MAR and an approximate Wasserstein gradient flow for complete data, including a further literature comparison, in Section~\ref{sec:Background}. In Section \ref{sec:method}, we derive our methodology, \name, adapting the Wasserstein gradient flow framework to the missing data setting and establishing consistency of the velocity field approximation. Section \ref{sec:emp_res} shows the impressive empirical performance of \name in a challenging simulated example, designed to test imputation methods for their ability to deal with MAR, as well as a broad range of real datasets. Finally, in Section~\ref{sec:concl}, we summarize our findings and outline directions for future research. Proofs and further technical details are provided in the Appendix.

%% file: content/background.tex
\section{Background}\label{sec:Background}

This section establishes background on missing values, the ignoring maximum likelihood estimator, as well as the Wasserstein gradient flow approximation of \citet{liu2024minimizing} with complete data. 

We start by introducing the notation used throughout the paper. Let $X \in \R^d$ denote a random vector with distribution $\P_X$ and density $\pX$, representing a complete observation of all feature variables, and $M \in \{0,1\}^d$ denote a missingness pattern with (discrete) distribution $\P_M$ and support $\mathcal{M} = \{m \in \{0,1\}^d ~|~ \P(M=m) > 0\}$. 
Given an independently and identically distributed sample $(X_i, M_i)_{i=1}^n$ of the joint distribution $\P_{X,M}$, in practice, we only observe the masked observations
\begin{equation*}
    X_{i,j}^*\,=\, 
    \begin{cases}
        X_{i,j}&~~~\text{for}~M_{i,j}=0\\
        \NA&~~~\text{for}~M_{i,j}=1
    \end{cases},
    \quad \text{with } i \in \{1,\dots,n\},~ j \in \{1, \dots, d\}.
\end{equation*}
For example, if the complete observation $X_1 = (X_{1,1}, X_{1,2}, X_{1,3})$ is associated to the missingness pattern $M_1 = (0, 1, 0)$, then we observe $X_1^*=(X_{1,1}, \NA, X_{1,3})$. For $m \in \{0,1\}^d$, let $X^{(m)} = (X_j \mid m_j=0, ~j\in \{1,\dots,d\})$ denote the sub-vector of $X$ corresponding to the observed values according to $m$. 
In the previous example, we have $X_1^{(M_1)} = (X_{1,1}, X_{1,3})$. The distributions of $X^{(m)}$ and $X^{(m)} \mid M=m$ are denoted by $\P_X^{(m)}$ and $\P_{X|M=m}^{(m)}$ and their densities by $\pX^{(m)}$ and $\pX_m^{(m)}$, respectively. The goal of this paper is to generate a new sample $\{\tilde{X_i}\}_{i=1}^{\tilde{n}}$ of fully observed data points that (approximately) follow the true distribution~$\P_X$. 

Different assumptions can be made about the dependence between $X$ and $M$. The missingness mechanism is called Missing Completely at Random (MCAR) if $\P_{X,M} = \P_X \otimes \P_M$, that is, if $X$ and $M$ are independent, and, on the other extreme, Missing Not at Random (MNAR) if the dependence structure between $M$ and $X$ can be arbitrary. If, on the other hand, it holds that $\PMmXx{m} =\PMmXxm{m}$, the mechanism is called Missing at Random (MAR). We note that, here, $m$ occurs on both sides of the probability statement, making the condition more complex than it may first seem. In particular, MAR does not mean that $M$ is conditionally independent of $X$ given $X^{(m)}$. We refer to the discussion in \citet{whatismeant3, näf2024goodimputationmarmissingness}.

\subsection{The ignoring maximum likelihood estimator}
\label{sec:mle}


Under missing values, we essentially observe draws from $\P_{X|M=m}^{(m)}$ for different patterns $m$. MAR does not prohibit distribution shifts between patterns and can lead to arbitrary complex non-monotone patterns. This is one of the key reasons why results under general MAR are difficult to obtain.

Nonetheless, if a parametric model $\P_\theta$ is used to approximate the distribution $\P_X$, the parameter $\theta$ can be estimated via maximum likelihood under MAR. As shown in \citet{Golden2019} and discussed in \citet[Theorem~2.2]{cherief2025parametric}, the maximum likelihood estimator (MLE)  $\theta_n^{\text{ML}}$ \emph{ignoring the missing value mechanism} converges $\P_{X,M}$-a.s. to
\begin{equation}
\label{eq:ML_limit}
    \theta_\infty^{\text{ML}} = \argmin_{\theta \in \Theta} \expec[M \sim \P_M]{\KL{\P_{X|M}^{(M)} \,|\!|\, \P_\theta^{(M)}}}
\end{equation}
as the sample size $n$ goes to infinity, assuming that appropriate regularity conditions are satisfied. In other words, the estimator converges to a minimizer of the expected Kullback-Leibler (KL) divergence over all possible missingness patterns. Remarkably, when the model is well-specified, that is $\P_X = \P_{\theta^*}$ for some $\theta^* \in \Theta$, and the missingness
mechanism is MAR, we recover $\theta_\infty^{\text{ML}} = \theta^*$. 
It turns out that this is deeply related to the KL divergence and does not hold for general M-estimators.


\subsection{Approximated Wasserstein gradient flow without missing values}
\label{sec:wgf}

In this subsection, we summarize the method proposed in \citet{liu2024minimizing} using full data. Suppose we are given an independently and identically distributed sample of complete observations $\{X_i\}_{i=1}^n$ from the distribution $\P_X$ with density $\pX$ and want to create a new sample $\{\tilde{X}_i\}_{i=1}^{\tilde{n}}$ following a distribution $\P_{\tilde{X}}$ with density $\pXnew$, that minimizes the $f$-divergence
\begin{equation*}
    D_f[\pX,\pXnew] = \int \pXnew(x) f(r(x)) \de x \quad \text{with} \quad r(x)=\frac{\pX(x)}{\pXnew(x)}\,.
\end{equation*}
Note that, as a special case, $\KL{\pX \,|\!|\, \pXnew} = D_f[\pX,\pXnew]$ with $f(r)=r\log(r)$. \citet{liu2024minimizing} propose an iterative method to generate such a new sample $\tilde{X}$. More specifically, they use a Wasserstein Gradient Flow (WGF) for the functional $\mathcal{F}[\pXnew] = D_f[\pX,\pXnew]$ with an approximated velocity field. Notably, the $X_i$'s are assumed to be fully observed here. Consequently, the corresponding method is not designed to handle missingness and thus needs to be modified to be applicable in the missing value setup, that we are interested in, as discussed in Section~\ref{sec:method}. 

The WGF iteratively modifies the particles $\{\tilde{X}_{t,i}\}_{i=1}^{\tilde{n}}$, starting with $\{\tilde{X}_{0,i}\}_{i=1}^{\tilde{n}}$ drawn from some initial distribution $\pXnew_0$. Let $\pXnew_t$ denote the distribution of $\tilde{X}_t$. Then, according to \citet[Theorem~2.1]{liu2024minimizing}, briefly explained at the beginning of Section~\ref{sec:method}, the WGF of $\mathcal{F}[\pXnew] = D_f[\pX, \pXnew]$ characterizes the particle evolution via the ordinary differential equation (ODE)
\begin{equation}
\label{eq:wgf_fdiv}
    \frac{\de \tilde{X}_t}{\de t} = \nabla (h \circ r_t)(\tilde{X}_t) \quad \text{with} \quad h(r) = r f'(r) - f(r) \text{ and } r_t(x) = \frac{\pX(x)}{\pXnew_t(x)}\,.
\end{equation}
For the KL divergence, specifically, we have $h(r) = r$. The above ODE moves the density $\pXnew_t$ of $\tilde{X}_t$ along a curve where increasing $t$ results in decreasing $D_f[\pX, \pXnew_t]$, as desired. In practice, the ODE above is discretized using the forward Euler method, yielding
\begin{equation*}
    \tilde{X}_{t+1} = \tilde{X}_{t} + \eta \nabla (h \circ r_t)(\tilde{X}_t) \quad \text{with} \quad t \in \{0, 1, \dots, T-1\}\,,
\end{equation*}
where, by a slight abuse of notation, we use the index $t$ for the discrete time steps as well. $T$ and $\eta$ denote the final time and step size, which both need to be chosen by the user. 

As we do not have access to the densities $\pX$ and $\pXnew_t$ and thus also their ratio $r_t$, the velocity field $\nabla (h \circ r_t)$ needs to be estimated based on the samples $\{X_i\}_{i=1}^n \sim \pX$ and $\{\tilde{X}_{t,i}\}_{i=1}^{\tilde{n}} \sim \pXnew_t$ that we do have access to. For that, \citet{liu2024minimizing} exploit the key observation that
\begin{equation}
\label{eq:key_obs}
    h \circ r_t = \argmax_{\dfunc: \R^d \to \R} \expec[x \sim \pX]{\dfunc(x)} - \expec[x_t \sim \pXnew_t]{\psi^*(\dfunc(x_t))}\,,
\end{equation}
where $\psi$ is chosen such that  $\psi'(r) = r f'(r) - f(r)$ and $\psi^*$ is the convex conjugate of $\psi$. 
For the KL divergence, specifically, we have $\psi(r) = r^2/2$ and $\psi^*(\dfunc) = \dfunc^2/2$. 
Based on \eqref{eq:key_obs}, $h \circ r_t$ is approximated via a local linear estimator. For $w \in \R^d$ and $b \in \R$, let $\dfunc_{w,b}(x) = w^T x + b$ be a linear approximation of~$\dfunc$. Further, let $k_\sigma: \R^d \times \R^d \to \R$ be a given kernel function.
Then, locally, for $x$ close to $x^*$, $(h \circ r_t)(x)$ is approximated by $\dfunc_{w_t(x^*), b_t(x^*)}(x)$, where
\begin{equation}
\label{eq:optim_wb}
    (w_t(x^*), b_t(x^*)) = \argmax_{w \in \R^d,\,b \in \R} \frac{1}{n} \sum_{i=1}^n k_\sigma(X_i, x^*) \dfunc_{w,b}(X_i) - \frac{1}{\tilde{n}} \sum_{i=1}^{\tilde{n}} k_\sigma(\tilde{X}_{t,i}, x^*) \psi^*(\dfunc_{w,b}(\tilde{X}_{t,i}))\,.
\end{equation}
Note the three differences compared to the optimization target in \eqref{eq:key_obs}: (i) The feasible set for $\dfunc$, consisting of all functions from $\R^d$ to $\R$, is restricted to the space of linear functions $\{\dfunc_{w,b} ~|~ w \in \R^d, b \in \R\}$, (ii) the expected values are replaced by their empirical counterparts based on the given samples from $\pX$ and $\pXnew_t$, and (iii) the individual data points in the empirical means are weighted according to the kernel function $k_\sigma$ localizing the approximation.
Finally, $\nabla (h \circ r_t)(x^*)$ is approximated by $\nabla \dfunc_{w_t(x^*), b_t(x^*)} (x^*) = w_t(x^*)$, and the approximated (discretized) particle evolution becomes
\begin{equation*}
    \tilde{X}_{t+1} = \tilde{X}_{t} + \eta w_t(\tilde{X}_t) \quad \text{with} \quad t \in \{0, 1, \dots, T-1\}
\end{equation*}
and $w_t(\tilde{X_t})$ computed via the optimization in \eqref{eq:optim_wb}. 


In the next section, we describe how the method can be modified to be applicable when the data points $X_i$ are not fully observed but contaminated with missing values.

\subsection{Further related literature}
\label{sec:more_literature}



The idea of generating new values from observation with missing values itself is not entirely new. Besides \citet{Bayes}, there are several recent papers developing this idea, including \citet{misgan} and \citet{missdiff}. However, they use GAN and diffusion-based approaches, respectively. Their approaches are thus different from our WGF approach, and in particular, both make use of neural networks for their computation, while our algorithm only uses a tuning-lean local linear approach.

As mentioned above, there is a wealth of imputation methods, which may also be seen as a way of generating complete data. Of particular interest are \citet{neuimp} and \citet{MIRI}. The former uses a WGF for imputation, and thus appears very close to our method at first glance. However, while we are able to use the elegant local linear approach of \citet{liu2024minimizing} averaged over patterns, they are using a neural network and Denoising Score Matching, making their algorithm considerably more complex and prone to tuning parameters. Similarly, while \citet{MIRI} also recognize the importance of minimizing the KL divergence through rectified flows, they need to iteratively decrease and then re-estimate the KL divergence between $\P_{\tilde{X}, M}$ at time $t$ and $\P_{\tilde{X}} \otimes \P_M$ at time $t-1$ and also use neural networks. We instead use the idea of the ignoring MLE to directly minimize the KL divergence between our generated data and the observed part of the real data.

One major motivation for data generation instead of imputation is to be able to obtain (identification) results under MAR, motivated by parametric theory. While there are interesting results in \citet{MIRI} in this direction, they are also somewhat incomplete. In particular, it is unclear whether their optimization target, namely minimizing the mutual information between $X$ and $M$, yields the true distribution under perfect estimation. In contrast, our approach is theoretically motivated by the parametric case, allowing us to more easily obtain theoretical results. For instance, we are able to show that our method can recover the true distribution under perfect estimation, under MAR and an additional positivity condition (see Proposition~\ref{prop:KLmin}). Although this positivity condition is not discussed in \citet{MIRI}, it can be shown that it is necessary. The distribution recovered when it does not hold might still be one respecting MAR, but there is no guarantee that it is the correct one, even under perfect estimation. This is similarly true for the interesting results presented in \citet{missdiff}. While they do not mention MAR, they show that under mild conditions on the missing value mechanism, their objective upper bounds the true negative likelihood for a given pattern. However, this again does not guarantee any identification, even under perfect estimation.

Finally, as noted above, we directly employ tools introduced in \citet{liu2024minimizing} adapted to missing values. In fact, \citet{liu2024minimizing} include an application example for missing values using the WGF to minimize $\text{KL}(\P_{\tilde{X},M} \,|\!|\, \P_{\tilde{X}} \otimes \P_M)$. However, as pointed out in \citet{MIRI}, the WGF is not applicable for such a target as the optimization variable $\tilde{X}$ appears in both densities, $\pX$ and $\pXnew$. In a WGF, however, the distribution $\pX$ is assumed to be fixed. Moreover, it is not clear whether this objective is reasonable under MAR.

%% file: content/method.tex
\section{\name}
\label{sec:method}

We now extend the approximate WGF framework of \citet{liu2024minimizing} to the missing data setting. For that, we turn back to the missing value problem, where we observe the contaminated dataset $(X_i^*, M_i)_{i=1}^n$ with missing values $X^*_{i,j} = \NA$ whenever $M_{i,j} = 1$. As motivated by the parametric case, described in Section~\ref{sec:mle}, we want to generate a new complete sample $\{\tilde{X}_i\}_{i=1}^{\tilde{n}}$ in a way that the corresponding distribution $\P_{\tilde{X}}$ minimizes the expected KL divergence.

We start by providing the following population result, generalizing the case of the parametric MLE.
\begin{proposition}[Population consistency of the KL minimizer]\label{prop:KLmin}
Assume that MAR holds and $\PMmXx{0} > 0$ for almost all $x$. Then $\P_X$ is the unique solution to
\begin{equation}
\label{eq:ML_limit_nonparametric}
   \argmin_{P_X} \expec[M \sim \P_M]{\KL{\P_{X\mid M}^{(M)} \,|\!|\, P_X^{(M)}}}\,.
\end{equation}
\end{proposition}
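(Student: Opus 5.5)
Write the objective as $\sum_{m\in\mathcal M}\P(M=m)\,\KL{\P^{(m)}_{X\mid M=m}\,|\!|\,P^{(m)}}$, with densities $\pX_m^{(m)}$ and $p^{(m)}$. The plan is to turn this into a single KL divergence on the joint space of $(X^{(M)},M)$ and then use Gibbs' inequality.

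For a candidate $P_X$ with density $p$, define a joint law $Q$ of $(X,M)$ by keeping the true missingness mechanism: $Q(\de x, M=m) = p(x)\,\PMmXx{m}\de x$. By MAR, $\PMmXx{m} = \PMmXxm{m} =: \phi_m(x^{(m)})$ depends only on $x^{(m)}$. The observed-data density of $Q$ at $(x^{(m)},m)$ is therefore
\[
\int p(x)\phi_m(x^{(m)})\de x^{(-m)} = p^{(m)}(x^{(m)})\,\phi_m(x^{(m)}).
\]
Under the truth it is $\pX^{(m)}(x^{(m)})\phi_m(x^{(m)}) = \P(M=m)\,\pX_m^{(m)}(x^{(m)})$. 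Substituting these, the objective equals
\[
\sum_m \P(M=m)\int \pX_m^{(m)}\log\frac{\pX_m^{(m)}}{p^{(m)}}\,\de x^{(m)} = \KL{\P^{obs}\,|\!|\,\tilde Q^{obs}} + C,
\]
where $C$ does not depend on $p$.

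The catch is that $\tilde Q^{obs}$, the measure with density $p^{(m)}\phi_m\,\P(M=m)/\P(M=m)$, need not be a probability measure. I would avoid that by writing the objective directly as
\[
\sum_m \int \P(M=m)\pX_m^{(m)}\log\frac{\P(M=m)\pX_m^{(m)}}{p^{(m)}\phi_m}\,\de x^{(m)} + C',
\]
using $\log \P(M=m)\pX_m^{(m)}/(p^{(m)}\phi_m)=\log\pX_m^{(m)}/p^{(m)} + \log(\pX^{(m)}/\pX_m^{(m)}\cdot\ldots)$. This comparison is between two probability measures on $\bigsqcup_m \R^{|m|}\times\{m\}$: the true observed law, and the observed law of $Q$. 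The second has total mass $\sum_m\int p\,\phi_m = \int p(x)\sum_m \PMmXx{m}\de x = 1$. The KL is $\ge 0$, with equality iff the two observed laws coincide. Hence $\P_X$, which attains equality, is a minimizer.

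For uniqueness, equality forces $p^{(m)}\phi_m = \pX^{(m)}\phi_m$ a.e. for every $m$. Take $m=0$: then $\phi_0(x)=\PMmXx{0}>0$ a.e. by assumption, so $p=\pX$ a.e. This is exactly where positivity enters.

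The main obstacle is measure-theoretic bookkeeping rather than ideas. One must handle candidates $P_X$ without densities or with infinite KL, which can be excluded since the objective is then $+\infty$, or treated by working with Radon–Nikodym derivatives. One must also justify splitting the log terms, which requires $\pX_m^{(m)}\log \pX_m^{(m)}$ to be integrable, i.e.\ the constant to be finite. I would state a mild finiteness assumption or argue via the joint-KL identity directly, which avoids subtracting infinite constants. The identity itself is verified by Fubini together with the MAR factorization $\P(M=m)\pX_m^{(m)}(x^{(m)})=\pX^{(m)}(x^{(m)})\phi_m(x^{(m)})$.
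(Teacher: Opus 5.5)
Your proposal is correct and is essentially the paper's proof. The paper uses the same MAR factorization $\P(M=m)\,\pi_m^{(m)}=\pi^{(m)}\phi_m$ to split off a $p$-independent term, and what remains, $\sum_m\int \pi^{(m)}\phi_m\log(\pi^{(m)}/p^{(m)})\,dx^{(m)}$, is exactly your observed-data KL divergence; the paper imports its nonnegativity and equality case from \citet{Bayes} (Propositions 3.1 and 3.2), while you prove them directly via Gibbs' inequality and the $m=0$ component. The constant you worry about is automatically finite: it equals $\sum_m\int \pi^{(m)}\phi_m\log\phi_m\,dx^{(m)}-\sum_m\P(M=m)\log\P(M=m)$, and $-e^{-1}\le \phi\log\phi\le 0$. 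The step that genuinely needs care is the mass-one identity $\int p\sum_m\phi_m=1$, which requires the MAR mechanism to be defined with $\sum_m\phi_m(x^{(m)})=1$ at every $x$ charged by $p$, not merely $\P_X$-a.e.; otherwise $p$ can lower the objective by putting mass outside the support of $\pi$, so the statement itself implicitly assumes this.
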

\noindent
We note that if either of the two conditions does not hold, i.e.\ if either $\PMmXx{0} = 0$ on a set of nonzero probability or the missingness mechanism is not MAR, \eqref{eq:ML_limit_nonparametric} no longer holds. Both might thus be seen as necessary conditions.


Motivated by Proposition \ref{prop:KLmin} and the consistency of the MLE, we now derive a WGF approach to approximately sample from $\P_X$. Let $\mathcal{P}_2(\R^d)$ denote the space of probability measures on $\R^d$ with finite second moment, and let  $\mathcal{F}:\mathcal{P}_2(\R^d) \to \R$ be a functional to be minimized. When $\rho_t$ follows the WGF of $\mathcal{F}$, the corresponding probability flow ODE for particles $X_t \sim \rho_t$ takes the form
\begin{equation*}
    \frac{\de X_t}{\de t} = v_t(X_t) \quad\text{with}\quad v_t(x) = - \nabla \frac{\delta \mathcal{F}[\rho]}{\delta \rho}(x)\,.
\end{equation*}
Here, $\delta \mathcal{F}[\rho]/\delta \rho$ denotes the first $L^2$-variation of $\mathcal{F}$, i.e.\ the function that satisfies
\begin{equation*}
    \frac{\de}{\de \eps}\Big\rvert_{\eps = 0} \mathcal{F}[\rho_\eps] = \int_{\R^d} \frac{\delta \mathcal{F}[\rho]}{\delta \rho}(x) \frac{\partial \rho_\eps(x)}{\partial \eps}\Big\rvert_{\eps=0} \de x
\end{equation*}
for any smooth $\rho_\eps: (-\eps_0, \eps_0) \to \mathcal{P}_2(\R^d)$ with $\rho_0 = \rho$. In case $\mathcal{F}[\rho] = D_f[\pX, \rho]$ 
is an $f$-divergence, it can easily be verified that
\begin{equation*}
    \frac{\delta \mathcal{F}[\rho]}{\delta \rho}(x) = f\qty(\frac{\pX(x)}{\rho(x)}) - f'\qty(\frac{\pX(x)}{\rho(x)}) \frac{\pX(x)}{\rho(x)} = -(h \circ r)(x)\,,
\end{equation*}
which gives rise to the particle ODE \eqref{eq:wgf_fdiv}.
For our objective, we get the following result. 
\begin{lemma}[First variation of the objective]
\label{lem:deriv}
    For $\mathcal{F}[\pXnew] = \expec[M \sim \P_M]{D_f(\pX_M^{(M)}, \pXnew^{(M)})}$, it holds that
    \begin{equation*}
        \frac{\delta \mathcal{F}[\pXnew]}{\delta \pXnew}(x) = - \expec[M \sim \P_M]{(h \circ r^{(M)})(x^{(M)})} \quad\text{with}\quad r^{(M)} = \frac{\pX_M^{(M)}}{\pXnew^{(M)}}\,.
    \end{equation*}
\end{lemma}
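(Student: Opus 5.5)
The proof reduces the statement to the single-pattern computation already carried out for $D_f[\pX,\rho]$, combined with a marginalization step. Since $\mathcal{P}_M$ is discrete with finite support $\mathcal{M}$, we have
\begin{equation*}
    \mathcal{F}[\pXnew] = \sum_{m \in \mathcal{M}} \P(M=m)\, D_f\qty[\pX_m^{(m)}, \pXnew^{(m)}]\,.
\end{equation*}
Because this sum is finite, the derivative in $\eps$ passes through it, and by linearity it suffices to compute the first variation of $\mathcal{G}_m[\pXnew] := D_f[\pX_m^{(m)}, \pXnew^{(m)}]$ for each fixed $m$.

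Fix $m$ and a smooth curve $\pXnew_\eps$ with $\pXnew_0 = \pXnew$. Write $x = (x^{(m)}, x^{(-m)})$, where $x^{(-m)}$ collects the coordinates with $m_j=1$. The marginal is $\pXnew_\eps^{(m)}(x^{(m)}) = \int \pXnew_\eps(x) \de x^{(-m)}$, which is linear in $\pXnew_\eps$, so
\begin{equation*}
    \partial_\eps \pXnew^{(m)}_\eps\big|_{\eps=0}(x^{(m)}) = \int \partial_\eps \pXnew_\eps\big|_{\eps=0}(x) \de x^{(-m)}\,.
\end{equation*}
The target $\pX_m^{(m)}$ does not depend on $\eps$. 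Applying the $f$-divergence computation stated before the lemma on the lower-dimensional space $\R^{d-|m|}$ therefore gives
\begin{equation*}
    \frac{\de}{\de\eps}\Big|_{\eps=0} \mathcal{G}_m[\pXnew_\eps] = -\int (h\circ r^{(m)})(x^{(m)})\, \partial_\eps \pXnew^{(m)}_\eps\big|_{0}(x^{(m)}) \de x^{(m)}\,.
\end{equation*}
For the fully missing pattern $m = (1,\dots,1)$ the divergence is constant, and the formula remains consistent if we take $h\circ r^{(m)} \equiv h(1)$.

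Next substitute the marginal derivative and use Fubini to rewrite the integral over $x^{(m)}$ as an integral over $\R^d$:
\begin{equation*}
    -\int_{\R^d} (h\circ r^{(m)})(x^{(m)})\, \partial_\eps \pXnew_\eps\big|_0(x) \de x\,.
\end{equation*}
This identifies $\delta \mathcal{G}_m/\delta\pXnew(x) = -(h\circ r^{(m)})(x^{(m)})$ as a function on $\R^d$ that is constant in the missing coordinates. Summing over $m$ with weights $\P(M=m)$ yields $-\expec[M\sim\P_M]{(h\circ r^{(M)})(x^{(M)})}$.

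Two features of the first variation should be noted. It is defined only up to an additive constant, since $\int \partial_\eps \pXnew_\eps = 0$; this constant does not affect $\nabla$ and hence leaves the velocity field unchanged. The main obstacle is justifying the interchange of $\de/\de\eps$ with the integrals inside each $D_f$, and the use of Fubini. I would handle this as the paper does for the full-data case: assume enough regularity (smooth $f$, positive densities, dominated convergence for $f$ and $f'$ evaluated at the ratio) so that the full-data formula holds on each marginal space. With that assumption, the argument above is purely formal bookkeeping.
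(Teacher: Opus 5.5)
Your proposal is correct and follows essentially the same route as the paper's proof. Both split $\mathcal{F}$ into a finite sum over patterns, apply the pointwise chain rule to obtain $-(h\circ r^{(m)})\,\partial_\eps \rho^{(m)}_\eps$, write the marginal derivative as an integral over the missing coordinates, and collect everything into a single integral over $\R^d$. Your remarks on the fully missing pattern, the additive-constant ambiguity, and the regularity needed for the interchanges are correct additions that the paper's proof leaves implicit.
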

\noindent
The proof can be found in Appendix \ref{sec:proofs}.
It follows that, in the missing value setup, the particle ODE is given by 
\begin{equation}
\label{eq:ode_cont}
    \frac{\de X_t}{\de t} = \nabla \expec[M \sim \P_M]{(h \circ r_t^{(M)})(X_t^{(M)})}
\end{equation}
with $h(r)=r$ in the special case of the KL divergence.
Next, we derive a result similar to the key observation \eqref{eq:key_obs} of \citet{liu2024minimizing}, that builds the backbone of our method.

\begin{lemma}[Variational form for $h \circ r$]
\label{lem:key_obs_m}
    For each missingness pattern $m \in \mathcal{M}$, it holds that
    \begin{equation}
    \label{eq:key_obs_m}
        h \circ r^{(m)} = \argmax_{\dfunc_m:\, \R^{d_m} \to \R}
        \left\{ \expec[x^{(m)} \sim \pX_m^{(m)}]{\dfunc_m(x^{(m)})} - \expec[x^{(m)} \sim \pXnew^{(m)}]{\psi^*(\dfunc_m(x^{(m)}))} \right\}\,,
    \end{equation}
    where $d_m = d-\sum_{j=1}^{d} m_j$ denotes the number of observed values under missingness pattern $m$. 
\end{lemma}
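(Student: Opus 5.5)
The plan is to reduce the statement, for each fixed pattern $m \in \mathcal{M}$, to a pointwise convex-conjugate optimization. This is the same route that gives the complete-data identity \eqref{eq:key_obs} of \citet{liu2024minimizing}, now applied to the marginal densities $\pX_m^{(m)}$ and $\pXnew^{(m)}$ on $\R^{d_m}$. The pattern $m$ enters only through these two densities, and nothing couples different patterns. So the lemma is really the complete-data statement applied to the pair $(\pX_m^{(m)},\pXnew^{(m)})$, and the proof amounts to checking that the argument transfers verbatim.

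\textbf{Step 1: rewrite the objective as one integral against $\pXnew^{(m)}$.} Assume $\pX_m^{(m)} \ll \pXnew^{(m)}$, which is needed anyway for $r^{(m)}$ to be defined. Then
\begin{equation*}
\expec[x^{(m)} \sim \pX_m^{(m)}]{\dfunc_m(x^{(m)})} - \expec[x^{(m)} \sim \pXnew^{(m)}]{\psi^*(\dfunc_m(x^{(m)}))} = \int_{\R^{d_m}} \pXnew^{(m)}(y)\,\bigl[ r^{(m)}(y)\,\dfunc_m(y) - \psi^*(\dfunc_m(y)) \bigr] \de y .
\end{equation*}

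\textbf{Step 2: maximize the integrand pointwise.} For each fixed $y$ we maximize $s \mapsto r\,s - \psi^*(s)$ with $r = r^{(m)}(y)$. Take $\psi$ convex and lower semicontinuous, for example strictly convex because $\psi'' = r f''(r) \ge 0$ when $f$ is convex. Fenchel--Young then gives
\begin{equation*}
\sup_s \{ r s - \psi^*(s) \} = \psi^{**}(r) = \psi(r),
\end{equation*}
with equality exactly when $s \in \partial\psi(r)$. Since $\psi$ is differentiable, the maximizer is unique and equals $s = \psi'(r) = r f'(r) - f(r) = h(r)$. In the KL case this is explicit: $\psi^*(s) = s^2/2$, so maximizing $rs - s^2/2$ gives $s = r = h(r)$.

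\textbf{Step 3: pass from pointwise to functional maximization.} The function $\dfunc_m^\star = h \circ r^{(m)}$ maximizes the integrand at every point. Hence for any admissible $\dfunc_m$ the objective is at most its value at $\dfunc_m^\star$, which proves that $h \circ r^{(m)}$ attains the argmax. For uniqueness, strict concavity of $s \mapsto rs - \psi^*(s)$ (equivalently, differentiability of $\psi$) shows that any other maximizer must agree with $h \circ r^{(m)}$ $\pXnew^{(m)}$-almost everywhere. The argmax in \eqref{eq:key_obs_m} is therefore to be read up to $\pXnew^{(m)}$-null sets.

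\textbf{Main obstacle.} The difficulty is measure-theoretic bookkeeping, not the convex duality itself. Three points need care: (i) the admissible class of $\dfunc_m$ must be chosen so that both expectations are finite, for instance $\dfunc_m \in L^1(\pX_m^{(m)})$ with $\psi^*\circ\dfunc_m \in L^1(\pXnew^{(m)})$; (ii) one must check that $h\circ r^{(m)}$ lies in this class, which in the KL case means $r^{(m)} \in L^2(\pXnew^{(m)})$, i.e.\ finite $\chi^2$-divergence; (iii) the absolute continuity $\pX_m^{(m)} \ll \pXnew^{(m)}$ must be used for the change of measure. I would state these as standing regularity assumptions, matching those implicit in \eqref{eq:key_obs}. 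Beyond that, the argument is identical for every $m \in \mathcal{M}$.
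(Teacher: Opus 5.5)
Your proposal is correct and takes essentially the paper's route: for each fixed pattern $m$, both arguments reduce the statement to the pointwise Fenchel duality $\psi(r)=\sup_s\{rs-\psi^*(s)\}$, with $\psi'\circ r^{(m)} = h\circ r^{(m)}$ attaining the bound (via $h=\psi'$ exactly). Your version improves on the paper in three ways: arguing directly from the pointwise maximizer avoids the paper's interchange of $\sup$ and integral, you obtain $\pXnew^{(m)}$-a.e.\ uniqueness, which the paper's use of ``$=\argmax$'' leaves implicit, and you make the integrability conditions explicit (finite $\chi^2$-divergence in the KL case).
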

\noindent
The proof is again deferred to the appendix.
Similar to \citet{liu2024minimizing}, we estimate the maximizer in \eqref{eq:key_obs_m} by using a local linear approximation for $\dfunc_m$, replacing the expectations by their empirical counterparts and weighting the individual summands with a kernel function. In our specific case, we 
let $\mathcal{M}_n = \{m \in \mathcal{M} \mid \exists i \in \{1, \dots, n\} \text{ s.t. } M_i =m \}$ denote the set of observed patterns.
For each $m \in \mathcal{M}_n$, we define $\{X_{m,i}^{(m)}\}_{i=1}^{n_m} = \{X_i^{*(m)} \mid M_i=m,~i\in\{1,\dots,n\}\}$ to be the subset of observations with missingness pattern $m$. This can be interpreted as a sample from the conditional distribution $\P_{X\mid M=m}^{(m)}$ with density $\pX_m^{(m)}$. Note that the sample size $n_m$ might differ for each missingness pattern $m$, summing up to $n$ in total. 
Moreover, for each $m \in \mathcal{M}$, $\{\tilde{X}^{(m)}_{t,i}\}_{i=1}^{\tilde{n}}$ is a sample from $\P_{\tilde{X}_t}^{(m)}$ with density $\rho_t^{(m)}$. A local linear approximation of the maximizer $(h \circ r_t^{(m)})(x)$ for $x$ close to $x^*$ is then given by $g_{w,b}(x)=w^T x + b$ with the optimal choice of $(w,b)$ defined as 
\begin{align}
\label{eq:optim_wb_m}
    (w_{t,m}^{(m)}(x^*), b_{t,m}(x^*)) &= \argmax_{w \in \R^{d_m}, b \in \R} \frac{1}{n_m} \sum_{i=1}^{n_m} k_\sigma(X_{m,i}^{(m)}, {x^*}^{(m)}) \dfunc_{w,b}(X_{m,i}^{(m)}) \nonumber\\
    &\qquad\qquad\quad- \frac{1}{\tilde{n}} \sum_{i=1}^{\tilde{n}} k_\sigma(\tilde{X}^{(m)}_{t,i}, {x^*}^{(m)}) \psi^*(\dfunc_{w,b}(\tilde{X}_{t,i}^{(m)}))\,.
\end{align}
For the KL divergence, we use $\psi^*(\dfunc) = \dfunc^2/2$. Consequently, the gradient $\nabla (h \circ r_t^{(m)})(x^*)$ can be estimated by $\nabla g_{w_{t,m}(x^*), b_{t,m}(x^*)}(x^*) = w_{t,m}(x^*)$, whereby $w_{t,m}(x^*)$ is the same as $w_{t,m}^{(m)}(x^*)$ for any index $j$ with $m_j=0$ and zero otherwise.
Using this approximation in \eqref{eq:ode_cont} and replacing the expectation over $M$ by its empirical counterpart, we finally get the approximated discretized WGF particle ODE
\begin{equation}\label{eq:Mainiteration}
    \tilde{X}_{t+1} = \tilde{X}_{t} + \eta \sum_{m \in \mathcal{M}_n} \frac{n_m}{n} w_{t,m}(\tilde{X}_t) \quad \text{with} \quad t \in \{0, 1, \dots, T-1\}\,,
\end{equation}
starting with $\tilde{X}_0$ drawn from some initial distribution $\pXnew_0$, which, together with the step size $\eta$ and the number of iterations $T$, needs to be chosen by the user.

The following upper bound for the estimation error of $w_{t,m}(\cdot)$ 
justifies the local linear approximation.
\begin{proposition}[Velocity field approximation error]
\label{prop:approx_error}
    Under the assumptions of \citet[Theorem 4.8]{liu2024minimizing} and if $\sigma \le 1$, it holds that, for all $t \in \{0, 1, \dots, T-1\}$ and $x^* \in \mathcal{X}$,
    \begin{equation}
    \label{eq:err_bound}
        \left|\!\left|\sum_{m \in \mathcal{M}} \frac{n_m}{n} w_{t,m}(x^*) - \expec[M]{\nabla (h \circ r_t^{(M)})(x^*)}\right|\!\right|_2 \le \bigo{  \sqrt{\frac{\abs{\mathcal{M}}}{n \sigma^d}} + \sigma^2}\,,
    \end{equation}
    where $\abs{\mathcal{M}}$ denotes the cardinality of $\mathcal{M}$, that is, the number of patterns with positive probability.
\end{proposition}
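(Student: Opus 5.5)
We reduce the statement to the complete-data bound of \citet[Theorem 4.8]{liu2024minimizing}, applied separately to each pattern, and then combine the patterns. The error splits as
\begin{align*}
&\sum_{m \in \mathcal{M}} \frac{n_m}{n} w_{t,m}(x^*) - \sum_{m\in\mathcal{M}} \P(M=m)\nabla (h \circ r_t^{(m)})(x^*) \\
&\quad= \sum_{m \in \mathcal{M}} \frac{n_m}{n}\qty(w_{t,m}(x^*) - \nabla (h \circ r_t^{(m)})(x^*)) + \sum_{m\in\mathcal{M}} \qty(\frac{n_m}{n} - \P(M=m))\nabla (h \circ r_t^{(m)})(x^*).
\end{align*}
Here $w_{t,m}=0$ when $n_m=0$, so unobserved patterns contribute only through the second sum.

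\textbf{Per-pattern term.} For fixed $m$, the optimization \eqref{eq:optim_wb_m} is exactly the estimator \eqref{eq:optim_wb} of \citet{liu2024minimizing} in dimension $d_m$. Its target is the density pair $\pX_m^{(m)}$, $\pXnew_t^{(m)}$, with $n_m$ and $\tilde n$ samples, and by Lemma~\ref{lem:key_obs_m} its population maximizer is $h\circ r_t^{(m)}$. Assuming their hypotheses (smoothness, boundedness of the ratio, compact $\mathcal{X}$) hold for every marginal pair, their Theorem 4.8 gives, with high probability,
\[
\left\|w_{t,m}(x^*) - \nabla (h \circ r_t^{(m)})(x^*)\right\|_2 \le \bigo{\frac{1}{\sqrt{n_m\sigma^{d_m}}} + \sigma^2}.
\]
The bound transfers to the full space because both vectors vanish in the coordinates with $m_j=1$. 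Since $\sigma\le1$ and $d_m\le d$, we have $\sigma^{-d_m}\le\sigma^{-d}$, which gives a uniform rate. This is exactly where the hypothesis $\sigma \le 1$ enters.

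\textbf{Combining the patterns.} The weighted per-pattern sum is at most
\[
\sum_m \frac{n_m}{n}\qty(\frac{C}{\sqrt{n_m\sigma^{d}}}+C\sigma^2) = \frac{C}{\sqrt{n\sigma^d}}\sum_m\sqrt{\frac{n_m}{n}} + C\sigma^2,
\]
using $\sum_m n_m/n = 1$. By Cauchy--Schwarz, $\sum_m \sqrt{n_m/n}\le \sqrt{\abs{\mathcal{M}}}\,\sqrt{\sum_m n_m/n} = \sqrt{\abs{\mathcal{M}}}$, which produces the claimed term $\sqrt{\abs{\mathcal{M}}/(n\sigma^d)}$.

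\textbf{Frequency term.} The gradients $\nabla(h\circ r_t^{(m)})$ are uniformly bounded on $\mathcal{X}$ under the regularity assumptions. Hence the second sum is at most
\[
C\sum_m\abs{\frac{n_m}{n}-\P(M=m)} \le C\sqrt{\abs{\mathcal{M}}}\qty(\sum_m\qty(\frac{n_m}{n}-\P(M=m))^2)^{1/2} = \bigo{\sqrt{\abs{\mathcal{M}}/n}},
\]
by Cauchy--Schwarz together with $\mathbb{E}\sum_m(n_m/n-p_m)^2\le 1/n$ (multinomial variance), holding with high probability or in expectation. Because $\sigma\le1$, this is dominated by the first term.

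\textbf{Main obstacle.} Theorem 4.8 is stated for a fixed sample size, but $n_m$ is random and may be small, or the bound may be vacuous, for rare patterns. We plan to condition on $(n_m)_m$, which is legitimate because given $M_i=m$ the observations are i.i.d.\ from $\P^{(m)}_{X|M=m}$. We then apply the bound conditionally and absorb the rare patterns into the $\sqrt{n_m}/n$ weighting above. That weighting is harmless even when $n_m$ is small, since $\frac{n_m}{n}\cdot\frac{1}{\sqrt{n_m}} = \frac{\sqrt{n_m}}{n}$.

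One more check is needed: the constants in Theorem 4.8 must be uniform over the finitely many patterns. Finiteness of $\mathcal{M}$ makes this immediate by taking the maximum over $m$.
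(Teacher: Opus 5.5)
Your proposal is correct and follows essentially the same route as the paper: the same triangle-inequality split into a weighted per-pattern error and a pattern-frequency term, a per-pattern application of Theorem 4.8 of \citet{liu2024minimizing} with $\sigma^{d_m}\ge\sigma^{d}$, and Cauchy--Schwarz on $\sum_m \sqrt{n_m}$ to produce $\sqrt{\abs{\mathcal{M}}/(n\sigma^d)}$. The differences are only in detail. The paper treats the frequency term as standard $\mathcal{O}(n^{-1/2})$ Monte Carlo error, whereas your multinomial-variance bound carries an unnecessary $\sqrt{\abs{\mathcal{M}}}$ factor that is still absorbed; your remarks on conditioning on $(n_m)_m$ and on uniform constants spell out points the paper leaves implicit.
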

\noindent
The proof is deferred to the appendix, and we refer to \citet{liu2024minimizing} for the precise assumptions and constants hidden in the $\mathcal{O}$-notation. Consequently, the estimation error vanishes as $n$ tends to infinity and $\sigma$ goes to zero, provided $n\sigma^d$ diverges. Moreover, the bound in \eqref{eq:err_bound} illuminates the dependence of the estimation error on the cardinality of $\mathcal{M}$: The more missingness patterns have nonzero probability, the worse the constant in the rate gets, with $\abs{\mathcal{M}}=2^d$ in the worst case. It is also instructive to note that $\abs{\mathcal{M}}$ could be replaced by $\abs{\mathcal{M}_n} \leq \abs{\mathcal{M}}$, the actual number of patterns drawn for a given $n$. This illustrates that the rate is not dependent on the minimum number of observations across patterns, but rather on the overall number of patterns relative to $n$. In particular, having a small number of observations $n_m$ for some patterns $m$ will not diminish the rate as long as $\abs{\mathcal{M}_n}$ is small relative to $n$. Although the result holds for all points $x^*$ and time steps $t$, the coefficients in the upper bound on the right-hand side of \eqref{eq:err_bound} may well depend on their values. Establishing a uniform bound remains an interesting direction for future work. We also note that, thanks to the MAR assumption and Lemma \ref{lem:deriv}, it is possible to focus only on the observed part of a given pattern $m$. This is what makes the approximation result possible, even under complex non-monotone missingness, where each $X \mid M=m$ may follow a different distribution.


\citet{liu2024minimizing} propose to use gradient descent for the optimization in \eqref{eq:optim_wb}. However, since we are specifically interested in the KL divergence, where $\psi^*(\dfunc)=\dfunc^2/2$ has a linear derivative $(\psi^*)'(d) = d$, the optimization problem reduces to a system of linear equations that can be solved directly. The advantage of using a direct numerical solver for this system of linear equations instead of a gradient descent is threefold: (i) It results in a higher accuracy of the solution, (ii) it is computationally more efficient and (iii) it reduces the number of hyperparameters, as, for gradient descent, the user needs to choose the step size and the number of steps (or another termination condition). Derivations of the linear approach, as well as further details on the implementation are given in Appendix \ref{sec:direct_solve}. This appendix also includes pseudo-code in Algorithm~\ref{alg:method}. 

Finally, we note that our method has only three hyperparameters: the time step size $\eta$, number of time steps $T$, and bandwidth $\sigma$. Importantly, the choice of $T$ and $\eta$ does not involve a trade-off. In principle, smaller values of $\eta$ and larger values of $T$ should always improve performance, with the only practical limitation being the available computational resources. By contrast, $\sigma$ must be selected more carefully, as both excessively large and small values can degrade the results. Nevertheless, as shown in the empirical section, a simple heuristic choice of $\sigma$ already yields strong performance.

%% file: content/results.tex
\section{Empirical results}
\label{sec:emp_res}

In this section, we demonstrate that \name can match or even exceed state-of-the-art imputation methods in MAR examples. 
As competitors we include MICE \citep{mice}, GAIN \citep{GAIN}, Hyperimpute \citep{Jarrett2022HyperImpute}, MIRI \citep{MIRI}, the Bayesian approach of \citet{Bayes}, MissDiff \citep{missdiff} and NewImp \citep{neuimp}. We compare these methods on a simulated example, originally devised to test imputations under MAR, as well as ten real datasets. In all experiments, we use the RBF kernel as $k_\sigma$, take $T=1000$, $\eta=0.01$, and $\sigma$ following the median heuristic of \citealt{gretton2012optimal} based on the initial sample $\tilde{X}_0$.



\subsection{Simulation study}
\label{sec:toyexmpl}

The following example, inspired by \citet[Example 6]{näf2024goodimputationmarmissingness}, is designed to test imputation methods for their handling of MAR.
For $d=3$, we take $X$ to have uniform marginals on $[0,1]$. Using a copula, we induce dependence between $X_1, X_2$, while $X_3$ is independent of both. We then define the following missingness mechanism: For $(m_1,m_2, m_3)=((0,0,0), (0,1,0), (1,0,0))$, let
    \begin{align*}
         &\PMmXx{m_1}=(x_1+x_2)/3\,; \nonumber \\
         &\PMmXx{m_2}=(2-x_1)/3\,; \nonumber \\
         &\PMmXx{m_3}=(1-x_2)/3\,.
    \end{align*}
This mechanism clearly meets MAR and, moreover, $\P(M=0 \mid X=x) > 0$ for all $x$, as required by Proposition \ref{prop:KLmin}. Nonetheless, as outlined in \citet{näf2024goodimputationmarmissingness}, this is a rather complex non-monotone mechanism, designed to be difficult for imputation and estimation. The challenge formulated in \citet{näf2024goodimputationmarmissingness, practical} was to estimate the $0.1$ quantile of $X_1$. We note that any other base distribution $\P_X$ could be used for this experiment. In Appendix \ref{sec:additionalresults}, we perform the same experiment using a multivariate Gaussian distribution instead.

We simulate the above data generating process for a sample size of $n=2000$ and $B=20$ replications and attempt to generate a sample from $\P_X$ with our \name and the aforementioned imputation methods. For our method and MIRI, we initialize the particle evolution at $\tilde{X}_0 = X^*$ with missing values replaced by samples from the observed values in the same column. Figure \ref{fig:UNIboxplot} shows the result for both the quantile estimation of $X_1$ (right) as well as the energy distance \citep{EnergyDistance} between an independent sample and the recreated distribution (left). We can see that \name meets the challenge remarkably well in both metrics, far surpassing GAIN, Hyperimpute, MIRI, the Bayesian approach, MissDiff and NewImp. The only method coming close is MICE, which tends to empirically work exceptionally well in continuous MAR settings, but it is still less accurate than \name. In particular, our approach is arguably the method that estimates the true quantile most accurately, a rather impressive feat given the difficulty of the MAR mechanism and the strength of the competitors. Figure \ref{fig:UNIimputations} in the Appendix also confirms this visually, showing that \name recreates the distribution most accurately. Though this is merely a toy example, one would be hard-pressed to find a method that performs this well.

\begin{figure}
    \centering
    \includegraphics[width=0.9\linewidth]{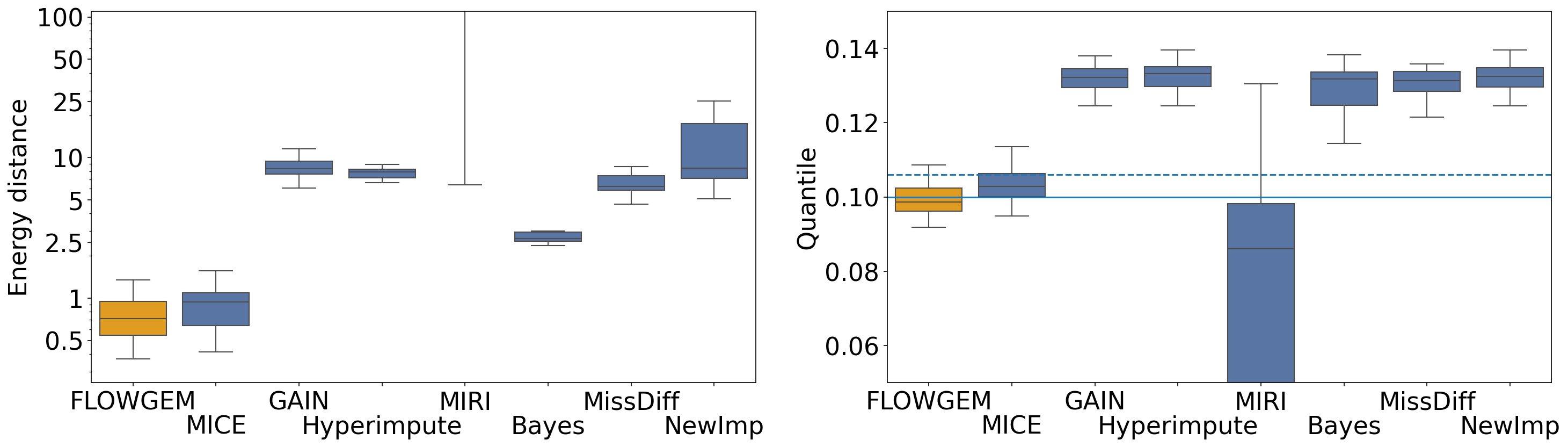}
    \caption{Standardized Energy distance in log-scale (left) and quantile estimate (right) for the simulated example with uniform distribution and $n=2000$, repeated over $B=20$ times. The solid blue line on the right indicates the true quantile value of the uncontaminated distribution, while the dashed line shows the population quantile when missing values are ignored. Our method, \name, clearly outperforms its competitors, minimizing the energy distance and estimating the quantile most accurately. We note that MICE also performs well, but unlike \name, it lacks theoretical justification. 
    Some values are cut off for readability: the energy distance for MIRI ranges from $6$ to $4 \cdot 10^{148}$ with a median of $2 \cdot 10^{46}$, and its quantile estimates reach as low as $-2.95$.}
    \label{fig:UNIboxplot}
\end{figure}

\subsection{Real examples}
\label{sec:realdata}

In this section, we consider ten real datasets of varying size and dimension with simulated missingness. The datasets are taken from various sources, such as the UCI machine learning repository \citep{lichman2013uci}. Table \ref{tab:datasetsfull} in Appendix \ref{sec:additionalresults} gives a detailed account of the source of each dataset. We randomly split each dataset in two parts and generate MAR missing values for the first part, using random patterns and the ampute function from the \textsf{R} package \texttt{mice}. This has become a standard way of introducing MAR and leads to around 40--50\% of missing values. For details we refer to Appendix \ref{sec:additionalresults}. Given this dataset with missing values, we approximately generate the full data either using \name or one of the competing methods and then compare the result to the second (independent) sample using the standardized energy distance described in Appendix \ref{sec:additionalresults}. Table \ref{tab:datasets} shows the result. Crucially, in all ten datasets, our method displays the lowest or second lowest energy value, sometimes by a large margin compared to the other methods. This is despite the large range in number of observations and dimensions, ranging from very small ($n=215$, $d=6$) to very large ($n=4901$, $d=272$). Unfortunately, the method of \citet{Bayes} displayed infeasible runtime for the larger datasets, as we detail in Appendix \ref{sec:additionalresults}.

\begin{table}
\caption{Standardized energy distance between generated and held-out samples across ten real-world datasets (lower is better). ``--'' indicates the method exceeded the runtime limit. Values larger than $10^3$ are truncated and displayed as $>10^3$. For each dataset, the best result is marked in bold, while the second best result is underlined. Our method is consistently either best or second best.} 
\label{tab:datasets}
\centering
\small
\begin{tabular}{lrrrrrrrr}
\toprule
Dataset & Ours & MICE & GAIN & Hyperimp. & MIRI & Bayes & MissDiff & NewImp \\
\midrule
scm1d & \underline{26.26} & \textbf{25.53} & $>10^3$ & 31.17 & $>10^3$ & -- & 457.23 & $>10^3$ \\
scm20d & \textbf{17.65} & \underline{21.40} & $>10^3$ & 31.45 & $>10^3$ & -- & 226.51 & $>10^3$ \\
pumadyn32nm & \underline{11.41} & \textbf{11.20} & $>10^3$ & 116.12 & $>10^3$ & -- & 230.36 & $>10^3$ \\
parkinsons & \textbf{22.98} & \underline{29.21} & $>10^3$ & 31.53 & $>10^3$ & -- & 44.99 & $>10^3$ \\
allergens & \underline{34.29} & 209.30 & 205.81 & \textbf{30.05} & 125.09 & -- & 89.17 & $>10^3$ \\
concrete & \textbf{5.84} & 22.28 & 29.84 & \underline{12.79} & 20.09 & $>10^3$ & 20.21 & $>10^3$ \\
stock & \underline{4.70} & \textbf{4.30} & 421.85 & 7.81 & 67.41 & $>10^3$ & 7.28 & $>10^3$ \\
forest & \textbf{3.99} & 4.68 & 4.55 & \underline{4.00} & 9.36 & 7.32 & 4.33 & $>10^3$ \\
housing & \textbf{7.80} & \underline{8.10} & 190.55 & 23.39 & 19.78 & 33.22 & 13.31 & $>10^3$ \\
windspeed & \underline{2.03} & 2.10 & 2.31 & \textbf{1.95} & 15.34 & 46.29 & 2.77 & $>10^3$\\
\bottomrule
\end{tabular}
\end{table}

%% file: content/conclusion.tex
\section{Conclusion}
\label{sec:concl}
In this paper, we introduced \name, a principled sample generator under MAR based on an approximate Wasserstein gradient flow. The method is grounded in a deep theoretical justification: the target functional, minimizing the average KL divergence over patterns, recovers the true distribution at the population level under MAR (Proposition~\ref{prop:KLmin}). Further, the approximation error of the local linear velocity field estimator vanishes asymptotically (Proposition~\ref{prop:approx_error}), an important first step in ensuring theoretical validity in the finite-sample regime. Empirically, \name
achieves impressive performance in MAR examples that rivals and often far surpasses modern imputation methods (Figure~\ref{fig:UNIboxplot} and Table~\ref{tab:datasets}), 
without relying on computationally expensive or difficult-to-tune neural networks. Notably, for all experiments, ranging from simulated data with $d=3$ to real data with widely varying scales up to $d=272$, the same values of $\eta$ and $T$, with a simple heuristic for $\sigma$, were used. In summary, \name\ shows that theoretical rigor and strong empirical performance can go hand in hand, a combination that remains rare among existing imputation methods.

This work opens several interesting directions of further research. First, while our principled approach already admits some theoretical analysis, establishing consistency of the distributional estimate in terms of a distributional distance is an important next step. Second, our algorithm is already fast for dimensions $d < 100$ but might require further improvements for higher dimensions. Third, extending \name to categorical or mixed-type data is a natural and practically important next step. 

%% file: content/appendix.tex
\section{Appendix}

\subsection{Algorithm and additional details}
\label{sec:direct_solve}

In this section, we first discuss the system of linear equations that is crucial to our algorithm and then present pseudo-code for our procedure in Algorithm \ref{alg:method}. In addition, we provide extensive details on the implementation.

\paragraph{Linear System of Equations} \citet{liu2024minimizing} propose to use gradient descent for the optimization in \eqref{eq:optim_wb}. Of course, we could use the same technique to approximate the solution of \eqref{eq:optim_wb_m}. However, since we are specifically interested in the KL divergence and, in that case, $\psi^*(\dfunc)=\dfunc^2/2$ has a linear derivative $(\psi^*)'(d) = d$, the optimization problem reduces to a system of linear equations that can be solved directly. 

Specifically, the objective function for a fixed $x^*$ and $m$ is given by 
\begin{equation*}
    L(w,b) = \frac{1}{n_m} \sum_{i=1}^{n_m} k_\sigma^{(m)}(X_{m,i}) \qty((X_{m,i}^{(m)})^T w + b) - \frac{1}{\tilde{n}} \sum_{i=1}^{\tilde{n}} k_\sigma^{(m)}(\tilde{X}_{t,i}) \psi^*\qty((\tilde{X}_{t,i}^{(m)})^T w + b)\,,
\end{equation*}
where we use the abbreviation $k_\sigma^{(m)}(x) \coloneqq k_\sigma(x^{(m)}, {x^*}^{(m)})$. To find the maximum, we solve
\renewcommand*{\arraystretch}{1.5}
\begin{equation*}
    \nabla L(w, b) = \begin{pmatrix}
        \frac{1}{n_m} \sum_{i=1}^{n_m} k_\sigma^{(m)}(X_{m,i}) X_{m,i}^{(m)} - \frac{1}{\tilde{n}} \sum_{i=1}^{\tilde{n}} k_\sigma^{(m)}(\tilde{X}_{t,i}) \qty((\tilde{X}_{t,i}^{(m)})^T w + b) \tilde{X}_{t,i}^{(m)}\\
        \frac{1}{n_m} \sum_{i=1}^{n_m} k_\sigma^{(m)}(X_{m,i}) - \frac{1}{\tilde{n}} \sum_{i=1}^{\tilde{n}} k_\sigma^{(m)}(\tilde{X}_{t,i}) \qty((\tilde{X}_{t,i}^{(m)})^T w + b)
    \end{pmatrix}
    = 0\,,
\end{equation*}
which can be rewritten as a system of linear equations $A(w,b)^T = c$ with 

\begin{equation*}
    A = \begin{pmatrix}
        A_{11} & A_{21}^T\\
        A_{21} & A_{22}
    \end{pmatrix}
    \quad \text{and} \quad
    c = \begin{pmatrix}
        c_{1}\\
        c_{2}
    \end{pmatrix}\,,
\end{equation*}
where
\begin{align}
    A_{11} &= \begin{pmatrix}
        \frac{1}{\tilde{n}} \sum_{i=1}^{\tilde{n}} k_\sigma^{(m)}(\tilde{X}_{t,i}) \tilde{X}_{t,i,1}^{(m)} \tilde{X}_{t,i,1}^{(m)} & \cdots &\frac{1}{\tilde{n}} \sum_{i=1}^{\tilde{n}} k_\sigma^{(m)}(\tilde{X}_{t,i}) \tilde{X}_{t,i,1}^{(m)} \tilde{X}_{t,i,d_m}^{(m)}\\
        \vdots & \ddots &\vdots\\
        \frac{1}{\tilde{n}} \sum_{i=1}^{\tilde{n}} k_\sigma^{(m)}(\tilde{X}_{t,i}) \tilde{X}_{t,i,d_m}^{(m)} \tilde{X}_{t,i,1}^{(m)} & \cdots &\frac{1}{\tilde{n}} \sum_{i=1}^{\tilde{n}} k_\sigma^{(m)}(\tilde{X}_{t,i}) \tilde{X}_{t,i,d_m}^{(m)} \tilde{X}_{t,i,d_m}^{(m)}
    \end{pmatrix}\,, \nonumber\\
    A_{21} &= \begin{pmatrix}
        \frac{1}{\tilde{n}} \sum_{i=1}^{\tilde{n}} k_\sigma^{(m)}(\tilde{X}_{t,i}) \tilde{X}_{t,i,1}^{(m)} &\dots &\frac{1}{\tilde{n}} \sum_{i=1}^{\tilde{n}} k_\sigma^{(m)}(\tilde{X}_{t,i}) \tilde{X}_{t,i,d_m}^{(m)}
    \end{pmatrix}\,, \nonumber\\
    A_{22} &= \frac{1}{\tilde{n}} \sum_{i=1}^{\tilde{n}} k_\sigma^{(m)}(\tilde{X}_{t,i})\,, \label{eq:A}
\end{align}
and
\begin{align}
    c_1 &= \begin{pmatrix}
        \frac{1}{n_m} \sum_{i=1}^{n_m} k_\sigma^{(m)}(X_{m,i}) X_{m,i,1}^{(m)} & 
        \dots &
        \frac{1}{n_m} \sum_{i=1}^{n_m} k_\sigma^{(m)}(X_{m,i}) X_{m,i,d_m}^{(m)}\\ 
    \end{pmatrix}^T, \nonumber\\
    c_2 &= \frac{1}{n_m} \sum_{i=1}^{n_m} k_\sigma^{(m)}(X_{m,i})\,. 
    \label{eq:c}
\end{align}
In practice, we apply Tikhonov regularization to $A$, replacing it with $A + \varepsilon I$ where $\varepsilon = 10^{-5}$, to ensure numerical stability during inversion.

\renewcommand*{\arraystretch}{1}

\begin{algorithm}
    \caption{\name (FLOW-based GEneration for Missing data)}
    \label{alg:method}

    \SetKwFunction{SGD}{SGD\_update}
    \SetKwFunction{Init}{initial\_sample}
    \SetKwFunction{Unique}{unique\_rows}
    \SetKwFunction{Mean}{mean\_over\_rows}
    \SetKwFunction{ComputeA}{compute\_A}
    \SetKwFunction{ComputeC}{compute\_c}
    \SetKwFunction{Solve}{solve}

    \KwData{Observations with missing values $(\mathbf{X}, \mathbf{M}) \in \R^{n\times d} \times \{0,1\}^{n \times d}$,\\
    Method to sample from $\pXnew_0$ (\Init), Final time $T$, Step size $\eta$,\\
    Kernel function $k_\sigma$, Bandwidth $\sigma$,
    New sample size $\tilde{n}$}
    \KwResult{New complete sample $\{\tilde{X}_{T,i}\}_{i=1}^{\tilde{n}}$ approximately distributed as $\P_X$}

    $\mathbf{\tilde{X}}_0 \leftarrow$ \Init{$\mathbf{X}, \mathbf{M}, \tilde{n}$} \tcp{$\in \R^{\tilde{n} \times d}$}
    \For{$t \leftarrow 0$ \KwTo $T-1$}{
        \For(\tcp*[h]{can be parallelized or vectorized}){$i \leftarrow 0$ \KwTo $\tilde{n} - 1$}{
            $x^* \leftarrow \mathbf{\tilde{X}}_{t,i}$\;
            \For(\tcp*[h]{can be parallelized}){$m \in$ \Unique{$\mathbf{M}$}}{
                $\mathbf{X}_m \leftarrow$ rows $\mathbf{X}_j$ with $\mathbf{M}_j = m$\;
                $A \leftarrow$ \ComputeA($k_\sigma, m, x^*, \mathbf{\tilde{X}}_t$) \tcp{according to \eqref{eq:A}}
                $c \leftarrow$ \ComputeC($k_\sigma, m, x^*, \mathbf{X}_m$) \tcp{according to \eqref{eq:c}}
                $(w_m, b_m) \leftarrow$ \Solve($A$, $c$)
            }
            $\mathbf{\tilde{X}}_{t+1,i} \leftarrow \mathbf{\tilde{X}}_{t,i} + \eta \sum_{m}\frac{n_m}{n}w_m$\;
        }
    }
    \Return{$\mathbf{\tilde{X}}_T$}
\end{algorithm}


\paragraph{Standardization} Some datasets such as ``scm20d'' have widely different scales for different variables. To minimize numerical issues, we standardize the columns by using a diagonal matrix $\Lambda$ and an estimated mean $\mu$. To motivate this approach, we are able to make use of the well-known fact that the KL divergence is invariant to affine transformations. For all $m$ and all $t$, let $\Lambda_m=(\Lambda_{i,j})_{i: m_i=0, j: m_j=0}$ be the $d_m$-dimensional submatrix of $\Lambda$ and similarly, $\mu_m=(\mu_j)_{j: m_j=0}$. Moreover, let 
\[
Y^{(m)} = \Lambda^{-1/2}_{m} (X^{(m)}-\mu_m) \quad \text{and}\quad \tilde{Y}_t= \Lambda^{-1/2} (\tilde{X}_t-\mu)
\]
be the standardized data, and define by
\begin{align*}
   \tilde{\pi}_m^{(m)}(y^{(m)}) &= \det(\Lambda_m)^{1/2} \pi_m^{(m)}(\Lambda_m^{1/2} y^{(m)} + \mu),\\
   \tilde{\rho}_t(y) &= \det(\Lambda)^{1/2}  \rho_t(\Lambda^{1/2} y + \mu)
\end{align*} 
the respective scaled densities. Then by a simple change-of-variable argument in the integration, it holds that
\begin{align*}
    \expec[M \sim \P_M]{\KL{\P_{X|M}^{(M)} \,|\!|\, \P_{\tilde{X}_t}^{(M)}}}\quad= \sum_{m \in \mathcal{M}} \P(M=m) \int \log\left( \frac{\pi_m^{(m)}(x^{(m)})}{\rho_t^{(m)}(x^{(m)})}  \right)\pi_m^{(m)}(x^{(m)}) dx^{(m)}\\
    = \sum_{m \in \mathcal{M}} \P(M=m) \int \log\left( \frac{\tilde{\pi}_m^{(m)}(y^{(m)})}{\tilde{\rho}_t^{(m)}(y^{(m)})}  \right)\tilde{\pi}_m^{(m)}(y^{(m)}) dy^{(m)}\,.
\end{align*}
The last expression is simply the KL divergence between the scaled distributions. Combined with Proposition \ref{prop:KLmin}, this shows that minimizing the KL divergence between the scaled distribution will recover $P_X$ after rescaling. Thus, we may standardize $X$ and $X_t$ using any $\mu$, $\Lambda$. In our case, they are obtained by calculating standard deviations and means separately for each column only on the observed data.

\paragraph{Early Stopping} In the real data experiments, we employ an early stopping mechanism. In particular, we measure the relative change of the particles in \eqref{eq:Mainiteration} at each $t$ and stop if
\begin{align*}
    \frac{\frac{1}{\tilde{n}} \sum_{i=1}^{\tilde{n}} |\!|\sum_{m \in \mathcal{M}_n} \frac{n_m}{n} w_{t,m}(\tilde{X}_{t,i}) |\!|_2}{\frac{1}{\tilde{n}} \sum_{i=1}^{\tilde{n}} |\!|\tilde{X}_{t,i}|\!|_2} < \varepsilon\,,
\end{align*}
using $\varepsilon = 0.01$. This not only makes the algorithm faster, but also impedes the algorithm from overshooting in case $\eta$ was chosen too large. Finally, if the norm of the gradient increases in an iteration, we halve $\eta$ for all subsequent iterations.  


\begin{table}
\caption{Real data examples considered in Section~\ref{sec:realdata}. For each dataset, we report the number of rows, the number of columns, the actual number of observations $n$ and dimensions $d$ used in the analysis, as well as their source. UCI refers to \citet{lichman2013uci} and was downloaded from \protect\url{https://github.com/treforevans/uci_datasets/tree/master}.} 
\label{tab:datasetsfull}
\centering
\begingroup
\begin{tabular}{lccccl}
  \hline
Name & Rows & Cols & n & d & Source \\ 
  \hline
scm1d & 9803 & 296  &  4901 & 272 & \citet{OpenML2013} \\ 
scm20d & 8966 & 77 & 4482 &  58 &  \citet{OpenML2013} \\ 
pumadyn32nm & 8192  & 33   & 4095 &  33 & UCI\\
parkinsons & 5875  & 21 & 2937 &  18   & UCI\\
allergens & 2351 & 112   & 1175 &  49 &  \citet{allergens_datachallenge} \\
  concrete & 1030 & 9 & 514 &   8   & \citet{concrete_compressive_strength_165} \\
    stock & 536  & 12  & 267 &   9 & UCI\\
forest & 517 & 13 & 257 &   7  & UCI\\
  housing & 506   & 14  & 252 &  10  & UCI\\
  windspeed & 433 & 6& 215 &   6   &  \citet{R_mice} \\
\hline
\end{tabular}
\endgroup
\end{table}

\begin{table}
    \caption{Average computation time (in seconds) per method for the simulation study of Section~\ref{sec:toyexmpl} with $n=2000$, $d=3$, and uniform distribution. Mean and standard deviation over $B=20$ replications are reported.}
    \label{tab:runtimes}
    \centering
    \small
    \begin{tabular}{lrr}
    \toprule
    Method & Mean & Std. Dev. \\
    \midrule
    FLOWGEM & 22.85 & 1.36 \\
    MICE & 0.56 & 0.00 \\
    GAIN & 2.80 & 0.64 \\
    Hyperimpute & 8.68 & 5.63 \\
    MIRI & 1611.91 & 2588.70 \\
    Bayes & 2038.28 & 520.23 \\
    MissDiff & 19.22 & 1.03 \\
    NewImp & 18.16 & 0.56 \\
    \bottomrule
    \end{tabular}
\end{table}

\subsection{Details on experiments and additional results}\label{sec:additionalresults}

Table \ref{tab:datasetsfull} contains more details on the datasets and their sources. We note that the values of $n$ and $d$ are different than the total number of rows and columns in each dataset, because (1) columns with less than 10\% of unique values were removed for the final dataset and (2) we split the dataset in two for training and test set. 

All computations were performed using Google Colaboratory Pro with a TPU v6e-1 accelerator. The host runtime used Python 3.11. Experiments were run between March 2026 and April 2026. Average computation times per method for the simulated datasets are reported in Table~\ref{tab:runtimes}.

\paragraph{Missing Value Generation} We use the ampute function of the \texttt{mice} \textsf{R} package, as follows: We randomly generate a set of patterns, including a fully observed pattern, and generate MAR missingness with the ampute function according to those patterns, such that every pattern is observed with equal probability. We note that this results in about 40--50\% missing values, calculated as total number of missing entries divided by $n\times d$, as in \citet{OneBenchmarktorulethemall}. We note that it is often left ambiguous in other papers how ``x\% of missing values'' is defined. In our way of measuring, 40\% of missing values is substantial, in fact the recent benchmark by \citet{OneBenchmarktorulethemall} only considers up to 30\%.

\paragraph{Standardized Energy Distance} We assess the performance of imputation methods by measuring the distributional distance between the generated dataset and a set of independent complete data. To this end, we use the energy distance from \citet{EnergyDistance}, defined as
\begin{equation*}
    e^2(X, Y) = 2\mathbb{E} |\!|X - Y|\!|_{2} - \mathbb{E}|\!|X - X'|\!|_{2} - \mathbb{E}|\!|Y - Y'|\!|_{2}\,,
\end{equation*}
where $X$ and $Y$ are random variables (here observed and generated).  Before computing the energy distance, we standardized all variables in both imputed and fully observed datasets using the column-wise means and standard deviations estimated from the fully observed dataset \emph{prior to imposing missingness}. This ensures that the distance measure is not dominated by variables with large natural variance and is comparable across datasets.

\paragraph{Continuous Features} As our method intrinsically assumes the distribution $\P_X$ to be continuous (with density $\pi$), it was designed to handle continuous features. To respect that in our experiments, we filtered out columns in the datasets that had less than 10\% of unique values for our application. 

\paragraph{Initialization} For real data, we generate $\tilde{X}_0$ by imputing the dataset using the MICE implementation in the Python \texttt{hyperimpute} package with 10 iterations (the full MICE implementation uses 100 iterations). This gives a reasonable and quick-to-calculate data-adaptive starting value for our algorithm. 

\paragraph{Competitors} We compare our method against MICE \citep{mice}, GAIN \citep{GAIN}, Hyperimpute \citep{Jarrett2022HyperImpute}, MIRI \citep{MIRI}, the Bayesian density estimation of \citet{Bayes}, MissDiff \citep{missdiff} and NewImp \citep{neuimp}. For GAIN, Hyperimpute and MICE, we use the \texttt{hyperimpute} Python package \citep{hyperimpute-package} with default hyperparameter settings. For MIRI, MissDiff and NewImp, we use the implementation available at \url{https://github.com/yujhml/MIRI-Imputation} (last accessed in March 2026). As the original implementation only supports float32, we adapted the code to float64 to ensure a fair comparison with the remaining methods. For MIRI, we use the hyperparameters specified in \citet{MIRI}, namely \texttt{max\_rounds=15}, \texttt{batchsize=500}, \texttt{maxepochs=900}, \texttt{odesteps=100}, while for MissDiff and NewImp, we use the default settings. We note that, for essentially all competing methods, performance may be improved through more extensive hyperparameter tuning. In particular, \citet{MIRI} mention a different specification with higher number of rounds. However, our current setting was already straining our computational resources. While computation times of the current MIRI code are remarkably resistant to higher $d$, completing in about 70 minutes on the ``scm1d'' dataset, the code was excruciatingly slow on the smaller datasets. This can be seen for instance in the simulation with $d=3$ in Table~\ref{tab:runtimes}. Moreover, we saw no obvious convergent behavior and instead the code usually started producing \texttt{nan}s for too many iterations.

\begin{figure}
    \centering
    \includegraphics[width=0.9\linewidth]{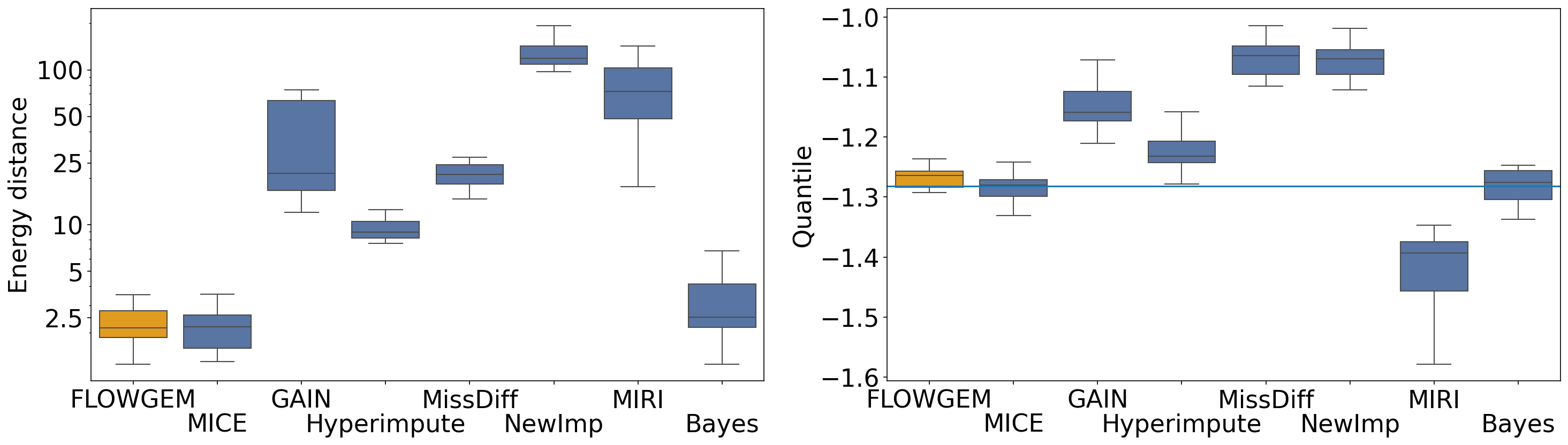}
    \caption{Standardized Energy distance in log-scale (left) and quantile estimate (right) for the simulated example with Gaussian distribution, $n=2000$, and repeated over $B=20$ times. The blue line on the right indicates the true quantile value of the uncontaminated distribution. Our method, \name, belongs to one of the best methods, minimizing the energy distance and estimating the quantile most accurately. We note that both MICE and the Bayes method essentially use Gaussian regression and Gaussian approximations respectively, giving them a somewhat unfair edge in this example. Despite this edge, it turns out that increasing $T$ to 1500 further boosts \name and closes the gap between our method and MICE or Bayes, but we did not include this result here.}
    \label{fig:Gaussianboxplot}
\end{figure}

Finally, we present the Gaussian version of the simulated example in Section \ref{sec:toyexmpl}.  For $d=3$, we take $X \sim \Gauss{0}{\Sigma}$, where $\Gauss{\mu}{\Sigma}$ denotes the Gaussian distribution with mean $\mu$ and covariance matrix $\Sigma$. Here, $\Sigma$ is chosen to have diagonal elements of one and a correlation of 0.7 between $X_1, X_2$, with no correlation between $X_1,X_2$ and $X_3$. We then define the following missingness mechanism: For $(m_1,m_2, m_3)=((0,0,0), (0,1,0), (1,0,0))$, let
    \begin{align*}
        &\PMmXx{m_1}=(\Phi(x_1)+\Phi(x_2))/3\,; \nonumber \\
        &\PMmXx{m_2}=(2-\Phi(x_1))/3\,; \nonumber \\
        &\PMmXx{m_3}=(1-\Phi(x_2))/3\,,
    \end{align*}
    where $\Phi$ is the standard Gaussian distribution function. Figure \ref{fig:Gaussianboxplot} presents the result. Again, we are far better than GAIN, Hyperimpute, MissDiff, NewImp and MIRI, only narrowly beaten by MICE and Bayes. However, we note that both MICE and the Bayes method essentially use Gaussian regression and Gaussian approximations respectively, giving them a somewhat unfair edge in this example. Despite this edge, it turns out that increasing $T$ to 1500 further boosts \name and closes the gap between our method and MICE or Bayes, but we did not include this result here. 
    Figures~\ref{fig:UNIimputations} and \ref{fig:Gaussianimputations} show scatter plots of the first two dimensions of the generated samples for the uniform and Gaussian simulation, respectively, visually confirming the results in Figures~\ref{fig:UNIboxplot} and \ref{fig:Gaussianboxplot}.

\begin{figure}[ht]
    \centering
    \includegraphics[width=0.9\linewidth]{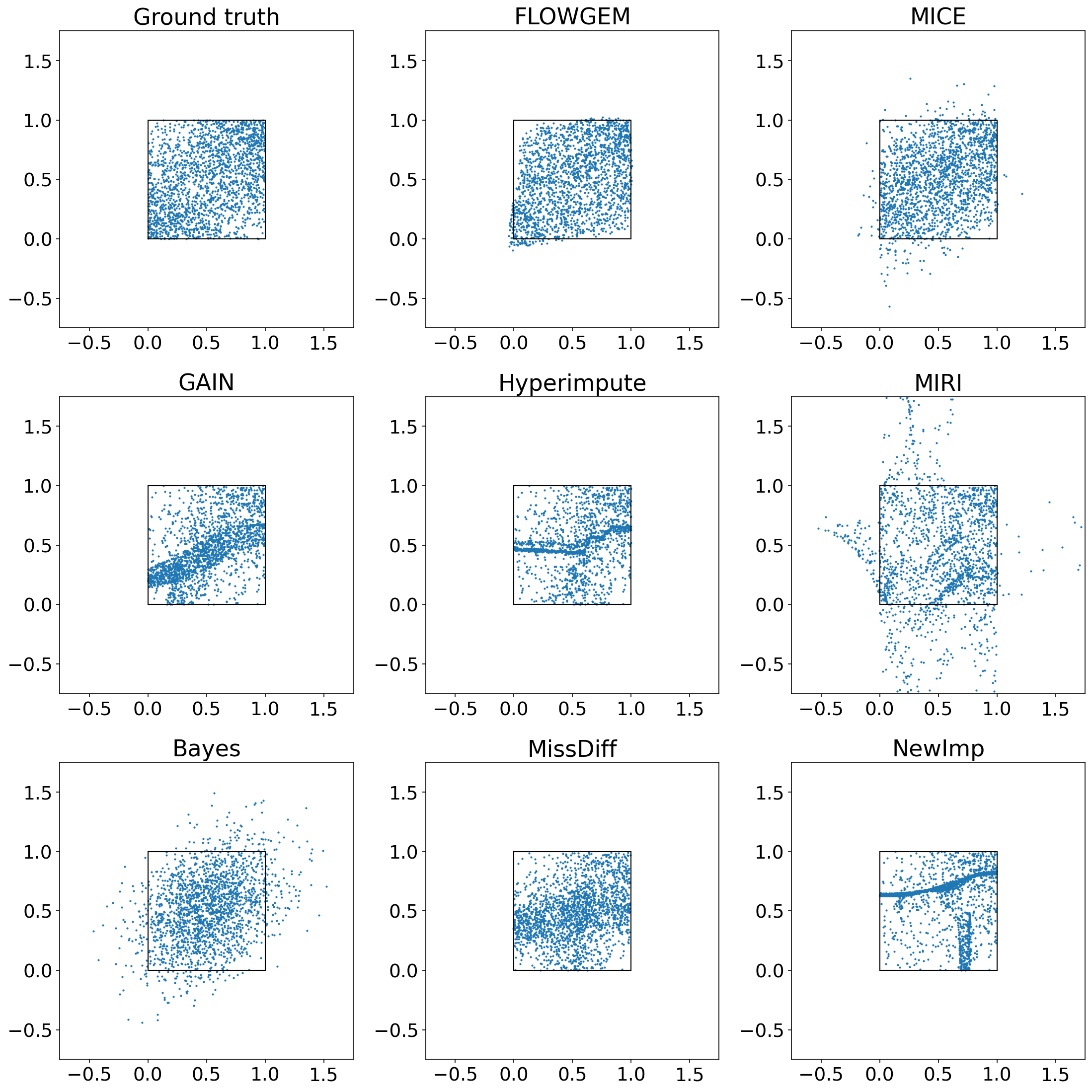}
    \caption{Scatter plots of the first two dimensions of the generated samples for each method, for a single replication of the simulation study in Section~\ref{sec:toyexmpl} with $n=2000$, $d=3$, and uniform distribution. The black square indicates the support $[0,1]^2$ of the true distribution.}
    \label{fig:UNIimputations}
\end{figure}

\begin{figure}[ht]
    \centering
    \includegraphics[width=0.9\linewidth]{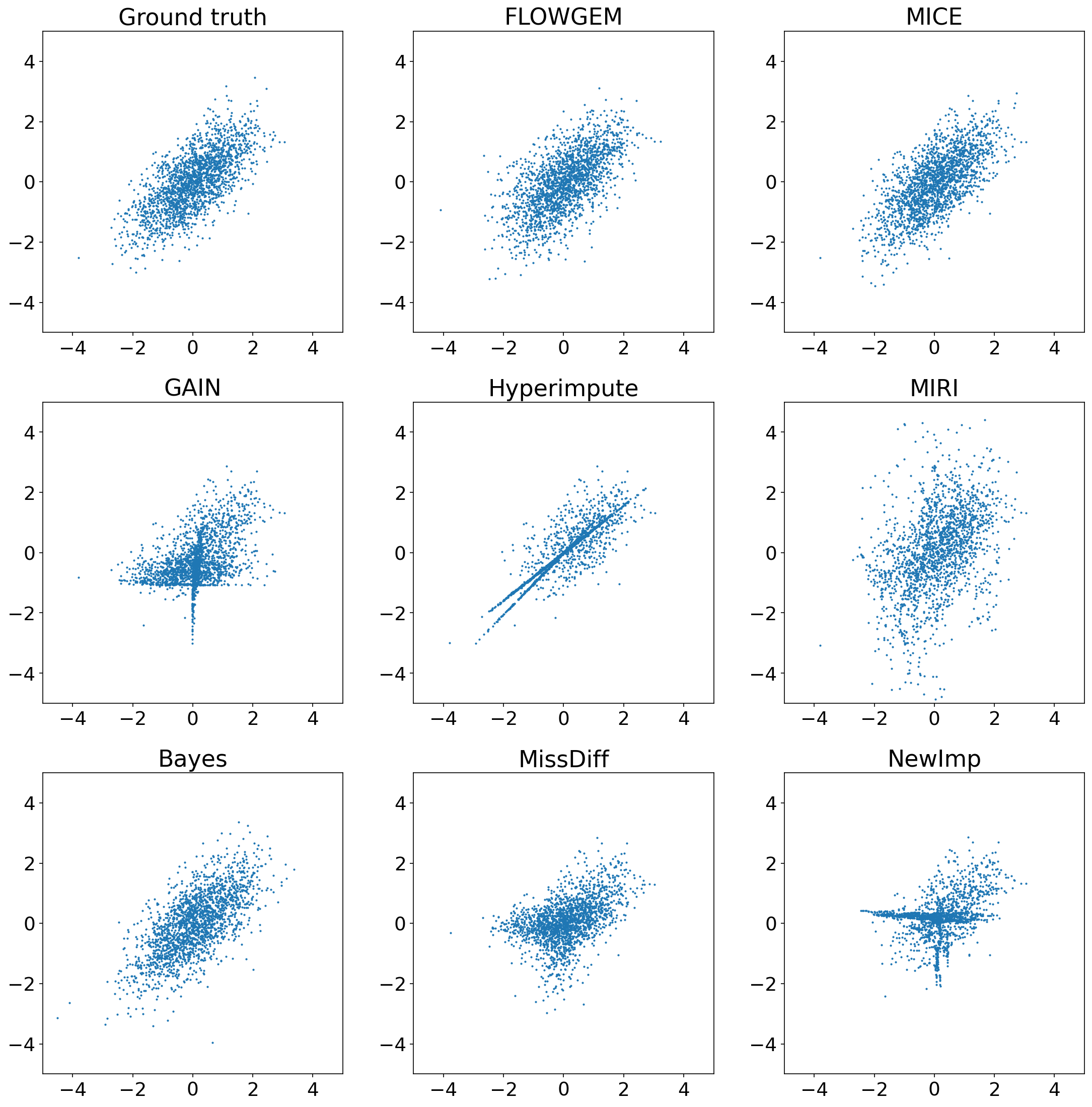}
    \caption{Scatter plots of the first two dimensions of the generated samples for each method, for a single replication of the simulation study in Section~\ref{sec:toyexmpl} with $n=2000$, $d=3$, and Gaussian distribution. We note that both MICE and the Bayes method essentially use Gaussian regression and Gaussian approximations respectively, giving them a somewhat unfair edge in this example.}
    \label{fig:Gaussianimputations}
\end{figure}

\subsection{Proofs}\label{sec:proofs}
Next, we provide the proofs of the results in the main text.


\begin{proof}[Proof of Proposition \ref{prop:KLmin}]
Recall that $\P_{X}^{(M)}$ has density $\pi^{(M)}$, while $\P_{X \mid M}^{(M)}$ has density $\pi_{M}^{(M)}$.
For an arbitrary distribution $P_X$ with density $p$, we may write
\begin{align*}
    &\expec[M \sim \P_M]{\KL{\P_{X|M}^{(M)} \,|\!|\, P_X^{(M)}}}\\
    &\quad= \sum_{m \in \mathcal{M}} \P(M=m) \int \log\left( \frac{\pi_m^{(m)}(x^{(m)})}{p^{(m)}(x^{(m)})}  \right)\pi_m^{(m)}(x^{(m)}) dx^{(m)}\\
    &\quad= \sum_{m \in \mathcal{M}} \P(M=m) \int \log\left( \frac{\pi^{(m)}(x^{(m)})}{p^{(m)}(x^{(m)})} \frac{\P(M=m \mid x^{(m)})}{\P(M=m)}  \right)\pi_m^{(m)}(x^{(m)}) dx^{(m)}\\
    &\quad= \sum_{m \in \mathcal{M}}  \P(M=m)\int \log\left( \frac{\pi^{(m)}(x^{(m)})}{p^{(m)}(x^{(m)})}  \right)\pi_m^{(m)}(x^{(m)}) dx^{(m)}  \\
    &\qquad\quad + \sum_{m \in \mathcal{M}} \P(M=m) \int \log\left(  \frac{\P(M=m \mid x^{(m)})}{\P(M=m)}  \right)\pi_m^{(m)}(x^{(m)}) dx^{(m)}\,.
\end{align*}
The second term above is not influenced by the choice of $P_X$, so we might ignore it for the $\argmin$ and focus on the first term:
\begin{align*}
     &\sum_{m \in \mathcal{M}} \P(M=m) \int \log\left( \frac{\pi^{(m)}(x^{(m)})}{p^{(m)}(x^{(m)})}  \right)\pi_m^{(m)}(x^{(m)}) dx^{(m)}\\
     &\quad=  \sum_{m \in \mathcal{M}} \int \log\left( \frac{\pi^{(m)}(x^{(m)})}{p^{(m)}(x^{(m)})}  \right)\pi^{(m)}(x^{(m)}) \P(M=m \mid x^{(m)}) dx^{(m)}\,.
\end{align*}

Following the same proof as in \cite[Proposition 3.1]{Bayes}, we obtain that under MAR,
\begin{equation*}
   \sum_{m \in \mathcal{M}} \int \log\left( \frac{\pi^{(m)}(x^{(m)})}{p^{(m)}(x^{(m)})}  \right)\pi^{(m)}(x^{(m)}) \P(M=m \mid x^{(m)}) dx^{(m)} \geq 0\,,
\end{equation*}
while zero is clearly attained for $p^{(m)}=\pi^{(m)}$. This shows that $\P_X$ is one solution to \eqref{eq:ML_limit_nonparametric}. Moreover, as shown in \cite[Proposition 3.2]{Bayes}, under MAR, $\P(M=0 \mid x) > 0$ for almost all $x$ implies that,
\begin{equation*}
   \sum_{m \in \mathcal{M}} \int \log\left( \frac{\pi^{(m)}(x^{(m)})}{p^{(m)}(x^{(m)})}  \right)\pi^{(m)}(x^{(m)}) \P(M=m \mid x^{(m)}) dx^{(m)} = 0 \iff \P_X=P_X\,,
\end{equation*}
showing that $\P_X$ is the \emph{unique} solution to \eqref{eq:ML_limit_nonparametric}. On the other hand, if $\P(M=0 \mid x) = 0$ for $x \in A$ with $\P_X(A) > 0$, we can construct a density $P_X$ that is equal to $\P_X$ on $A^c$ and has the same $k \leq d-1$ dimensional marginals, but such that $P_X \neq \P_X$ on $A$. This difference in turn will be masked by $\P(M=0 \mid x) = 0$. 
\end{proof}

\begin{proof}[Proof of Lemma \ref{lem:deriv}]
    Using the chain rule and the law of total probability, we have 
    \begin{align*}
        \frac{\de}{\de \eps}\rvert_{\eps = 0} \mathcal{F}[\rho_\eps] &= \sum_m \P(M=m) \int \frac{\de}{\de \eps}\rvert_{\eps = 0} f\qty(\frac{\pX_m^{(m)}(x^{(m)})}{\rho_\eps^{(m)}(x^{(m)})}) \rho_\eps^{(m)}(x^{(m)}) \de x^{(m)}\\
        &= \sum_m \P(M=m)\int \qty(f\qty(\frac{\pX_m^{(m)}(x^{(m)})}{\rho^{(m)}(x^{(m)})}) - f'\qty(\frac{\pX_m^{(m)}(x^{(m)})}{\rho^{(m)}(x^{(m)})}) \frac{\pX_m^{(m)}(x^{(m)})}{\rho^{(m)}(x^{(m)})}) \\
        &\qquad\qquad\qquad\qquad\quad \cdot  \frac{\de}{\de \eps}\rvert_{\eps = 0} \rho_\eps^{(m)}(x^{(m)}) \de x^{(m)}\\
        &= - \sum_m \P(M=m) \int \qty(h \circ r^{(m)})(x^{(m)}) \frac{\de}{\de \eps}\rvert_{\eps = 0} \rho_\eps^{(m)}(x^{(m)}) \de x^{(m)}\\
        &= - \sum_m \P(M=m) \int \qty(h \circ r^{(m)})(x^{(m)}) \frac{\de}{\de \eps}\rvert_{\eps = 0} \int \rho_\eps(x) \de x^{(1-m)} \de x^{(m)}\\
        &= - \int \sum_m \P(M=m) \qty(h  \circ r^{(m)})(x^{(m)}) \frac{\de}{\de \eps}\rvert_{\eps = 0} \rho_\eps(x) \de x\,.
    \end{align*}
    The result follows from the definition of $\frac{\delta \mathcal{F}[\rho]}{\delta \rho}(x)$ being the first variation of $\mathcal{F}[\rho]$.
\end{proof}

\begin{proof}[Proof of Lemma \ref{lem:key_obs_m}]
    Assume $\psi$ is convex and lower semi-continuous. Then, by duality, we have
    \begin{equation*}
        \psi(u) = \sup_{v \in \text{dom}(\psi^*)} \{vu - \psi^*(v)\}\,.
    \end{equation*}
    It follows that, for each $m \in \mathcal{M}$,
    \begin{align*}
        &D_\psi\qty(\pX_m^{(m)}, \pXnew^{(m)})\\
        &\quad= \int \pXnew^{(m)}(x^{(m)}) \psi\qty(\frac{\pX_m^{(m)}(x^{(m)})}{\pXnew^{(m)}(x^{(m)})}) \de x^{(m)}\\
        &\quad= \int \pXnew^{(m)}(x^{(m)}) \sup_{v \in \text{dom}(\psi^*)} \left\{v \cdot \frac{\pX_m^{(m)}(x^{(m)})}{\pXnew^{(m)}(x^{(m)})} - \psi^*(v) \right\} \de x^{(m)}\\
        &\quad= \sup_{\dfunc_m:\, \R^{d_m} \to \R} \left\{ \int \pXnew^{(m)}(x^{(m)}) \qty(\dfunc_m(x^{(m)}) \frac{\pX_m^{(m)}(x^{(m)})}{\pXnew^{(m)}(x^{(m)})} - \psi^*\qty(\dfunc_m(x^{(m)}))) \de x^{(m)}\right\}\\
        &\quad= \sup_{\dfunc_m:\, \R^{d_m} \to \R} \left\{ \int \dfunc_m(x^{(m)}) \pX_m^{(m)}(x^{(m)}) \de x^{(m)} - \int \psi^*\qty(\dfunc_m(x^{(m)})) \pXnew^{(m)}(x^{(m)}) \de x^{(m)}\right\}\,.
    \end{align*}
    Since, by duality, $\psi^*(\psi'(r)) = r \psi'(r) - \psi(r)$, the specific choice $\dfunc_m = \psi' \circ r^{(m)}$ yields
    \begin{align*}
        &\int \dfunc_m(x^{(m)}) \pX_m^{(m)}(x^{(m)}) \de x^{(m)} - \int \psi^*(\dfunc_m(x^{(m)})) \pXnew^{(m)}(x^{(m)}) \de x^{(m)}\\
        &\quad= \int \psi'(r^{(m)}(x^{(m)})) \pX_m^{(m)}(x^{(m)}) \de x^{(m)} - \int \psi^*(\psi'(r^{(m)}(x^{(m)}))) \pXnew^{(m)}(x^{(m)}) \de x^{(m)}\\
        &\quad= \int \psi'(r^{(m)}(x^{(m)})) \pX_m^{(m)}(x^{(m)}) \de x^{(m)}\\
        &\qquad\quad- \int \qty(r^{(m)}(x^{(m)}) \psi'(r^{(m)}(x^{(m)})) - \psi(r^{(m)}(x^{(m)}))) \pXnew^{(m)}(x^{(m)}) \de x^{(m)}\\
        &\quad= \int \psi(r^{(m)}(x^{(m)})) \pXnew^{(m)}(x^{(m)}) \de x^{(m)}\\
        &\quad= D_\psi\qty(\pX_m^{(m)}, \pXnew^{(m)})\,.
    \end{align*}
    The result follows from the observation that $h(r) = rf'(r)-f(r) = \psi'(r)$.

    Note that \citet{liu2024minimizing} state that $\psi$ is chosen such that $\psi'(r)$ is equal to $r f'(r) - f(r)$ \emph{up to some constant}. Accordingly, they use $\psi(r) = (r-1)^2/2$ and $\psi^*(\dfunc) = \dfunc^2/2 + \dfunc$ for the KL divergence. We want to emphasize that this is not a valid choice when using $f(r)=r \log(r)$, since Equation~\eqref{eq:key_obs} only holds if $h(r) = \psi'(r)$. This is a minor issue, though, as it only influences the intercept $b_t$ in equation \eqref{eq:optim_wb} and not the sought-after slope $w_t$.
\end{proof}

\begin{proof}[Proof of Proposition \ref{prop:approx_error}]
    For fixed time step $t \in \{0, 1, \dots, T-1\}$, missingness pattern $m \in \mathcal{M}$ and $x^* \in \mathcal{X}$, \citet[Theorem 4.8]{liu2024minimizing} states that
    \begin{equation}    
    \label{eq:err_bound_liu}
         \left|\!\left| w_{t,m}(x^*) - \nabla (h \circ r_t^{(m)})(x^*)\right|\!\right|_2 \le C^{(1)}_{t, m, x^*} \frac{1}{\sqrt{n_m \sigma^{d_m}}} + C^{(2)}_{t, m, x^*}\sigma^2\,.
    \end{equation}
    By the triangle inequality, we have
    \begin{align*}
        &\left|\!\left| \sum_{m \in \mathcal{M}} \frac{n_m}{n} w_{t,m}(x^*) - \expec[M]{\nabla (h \circ r_t^{(M)})(x^*)}\right|\!\right|_2 \\
        &\quad\le \sum_{m \in \mathcal{M}} \frac{n_m}{n} \left|\!\left| w_{t,m}(x^*) - \nabla (h \circ r_t^{(m)})(x^*)\right|\!\right|_2\\
        &\qquad+ \left|\!\left| \sum_{m \in \mathcal{M}} \frac{n_m}{n} \nabla (h \circ r_t^{(m)})(x^*) - \expec[M]{\nabla (h \circ r_t^{(M)})(x^*)}\right|\!\right|_2\,.
    \end{align*}
    The second term is the standard Monte Carlo estimation error vanishing at a rate of $\bigo{n^{-1/2}}$, while the first term can be bounded by
    ~\\
    \begin{align*}
        \sum_{m \in \mathcal{M}} \frac{n_m}{n} \left|\!\left| w_{t,m}(x^*) - \nabla (h \circ r_t^{(m)})(x^*)\right|\!\right|_2 &\le \sum_{m \in \mathcal{M}} \frac{n_m}{n} \qty(C^{(1)}_{t, m, x^*} \frac{1}{\sqrt{n_m \sigma^{d_m}}} + C^{(2)}_{t, m, x^*}\sigma^2)\\
        &\le \qty(\max_{m \in \mathcal{M}} C^{(1)}_{t, m, x^*}) \frac{1}{n \sqrt{\sigma^{d}}} \sum_{m \in \mathcal{M}} \sqrt{n_m} + \max_{m \in \mathcal{M}} C^{(2)}_{t, m, x^*}\sigma^2\\
        &\le \qty(\max_{m \in \mathcal{M}} C^{(1)}_{t, m, x^*}) \frac{1}{n \sqrt{\sigma^{d}}} \sqrt{\abs{\mathcal{M}} \sum_{m \in \mathcal{M}} n_m} + \max_{m \in \mathcal{M}} C^{(2)}_{t, m, x^*}\sigma^2\\
        &= \qty(\max_{m \in \mathcal{M}} C^{(1)}_{t, m, x^*}) \sqrt{\frac{\abs{\mathcal{M}}}{n \sigma^{d}}} + \max_{m \in \mathcal{M}} C^{(2)}_{t, m, x^*}\sigma^2\,,
    \end{align*}
    where the first inequality applies \eqref{eq:err_bound_liu}, the second one follows from $\sigma \le 1$ and $d_m \le d$, the third one uses the Cauchy-Schwartz inequality, and the last equality holds because $\sum_m n_m = n$.
\end{proof}